\documentclass[sigconf,screen]{acmart}

\usepackage{multirow}
\usepackage{subcaption}
\usepackage{xspace}

\newtheorem{theorem}{Theorem}
\newtheorem{lemma}{Lemma}
\newtheorem{remark}{Remark}
\newcommand{\tabincell}[2]{\begin{tabular}{@{}#1@{}}#2\end{tabular}} 

\newcommand{\alg}{\textsc{PathCon}\xspace}
\newcommand{\xhdr}[1]{\vspace{2mm}{\noindent\bfseries #1}.}
\newcommand{\rl}[1]{{\textsf{#1}}\xspace}

\copyrightyear{2021}
\acmYear{2021}
\setcopyright{none}
\acmConference[KDD '21]{Proceedings of the 27th ACM SIGKDD Conference on Knowledge Discovery and Data Mining}{August 14--18, 2021}{Virtual Event, Singapore}
\acmBooktitle{Proceedings of the 27th ACM SIGKDD Conference on Knowledge Discovery and Data Mining (KDD '21), August 14--18, 2021, Virtual Event, Singapore}
\acmPrice{15.00}
\acmDOI{10.1145/3447548.3467247}
\acmISBN{978-1-4503-8332-5/21/08}

\settopmatter{printacmref=true}
\begin{document}
\fancyhead{}

\title{Relational Message Passing for Knowledge Graph Completion}

\author{Hongwei Wang}
\affiliation{
	\institution{Stanford University}
	\country{Stanford, California, United States}}
\email{hongweiw@cs.stanford.edu}

\author{Hongyu Ren}
\affiliation{
	\institution{Stanford University}
	\country{Stanford, California, United States}}
\email{hyren@cs.stanford.edu}

\author{Jure Leskovec}
\affiliation{
	\institution{Stanford University}
	\country{Stanford, California, United States}}
\email{jure@cs.stanford.edu}

\begin{abstract}
	Knowledge graph completion aims to predict missing relations between entities in a knowledge graph.
	In this work, we propose a \textit{relational message passing} method for knowledge graph completion.
	Different from existing embedding-based methods, relational message passing only considers edge features (i.e., relation types) without entity IDs in the knowledge graph, and passes relational messages among edges iteratively to aggregate neighborhood information.
	Specifically, two kinds of neighborhood topology are modeled for a given entity pair under the relational message passing framework:
	(1) \textit{Relational context}, which captures the relation types of edges adjacent to the given entity pair;
	(2) \textit{Relational paths}, which characterize the relative position between the given two entities in the knowledge graph.
	The two message passing modules are combined together for relation prediction.
	Experimental results on knowledge graph benchmarks as well as our newly proposed dataset show that, our method \alg outperforms state-of-the-art knowledge graph completion methods by a large margin.
	\alg is also shown applicable to inductive settings where entities are not seen in training stage, and it is able to provide interpretable explanations for the predicted results.
	The code and all datasets are available at \url{https://github.com/hwwang55/PathCon}.
\end{abstract}

\begin{CCSXML}
<ccs2012>
<concept>
<concept_id>10010147.10010178.10010187.10010188</concept_id>
<concept_desc>Computing methodologies~Semantic networks</concept_desc>
<concept_significance>500</concept_significance>
</concept>
<concept>
<concept_id>10010147.10010257.10010293.10010297.10010299</concept_id>
<concept_desc>Computing methodologies~Statistical relational learning</concept_desc>
<concept_significance>500</concept_significance>
</concept>
<concept>
<concept_id>10002950.10003624.10003633.10010917</concept_id>
<concept_desc>Mathematics of computing~Graph algorithms</concept_desc>
<concept_significance>300</concept_significance>
</concept>
</ccs2012>
\end{CCSXML}

\ccsdesc[500]{Computing methodologies~Semantic networks}
\ccsdesc[500]{Computing methodologies~Statistical relational learning}
\ccsdesc[300]{Mathematics of computing~Graph algorithms}

\keywords{Knowledge graph completion; message passing; graph neural networks}

\maketitle

\section{Introduction}
	Knowledge graphs (KGs) store structured information of real-world entities and facts.
	A KG usually consists of a collection of triplets. Each triplet $(h, r, t)$ indicates that head entity $h$ is related to tail entity $t$ through relationship type $r$.
	Nonetheless, KGs are often incomplete and noisy.
	To address this issue, researchers have proposed a number of KG completion methods to predict missing links/relations in KGs \cite{bordes2013translating, trouillon2016complex, yang2015embedding, sun2019rotate, kazemi2018simple, zhang2019quaternion, galarraga2015fast, yang2017differentiable, ho2018rule, zhang2019iteratively, sadeghian2019drum}.
	
	\begin{figure}[!t]
		\centering
		\begin{subfigure}[b]{0.47\textwidth}
   			\includegraphics[width=\textwidth]{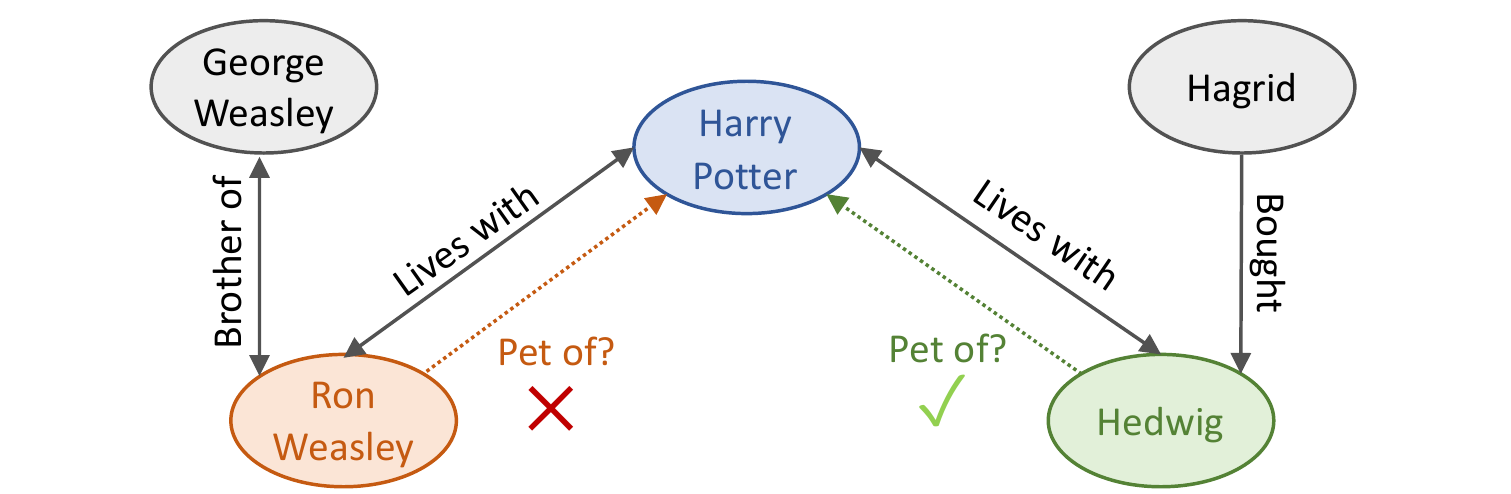}
   	    	\caption{Consider we aim to predict whether \rl{Ron Weasley} or \rl{Hedwig} is a \rl{Pet of} \rl{Harry Potter}. Both entities have the same relational path (\rl{Lives with}) to \rl{Harry Potter} but they have distinct relational context: \rl{Ron Weasley} has $\{\rl{Brother of}, \rl{Lives with}\}$, while \rl{Hedwig} has $\{\rl{Bought}, \rl{Lives with}\}$. Capturing the relational context of entities allows our model to make a distinction between \rl{Ron Weasley}, who is a person, and \rl{Hedwig}, which is an owl.}
   			\label{fig:kg_1}
		\end{subfigure}
		\begin{subfigure}[b]{0.47\textwidth}
	    	\includegraphics[width=\textwidth]{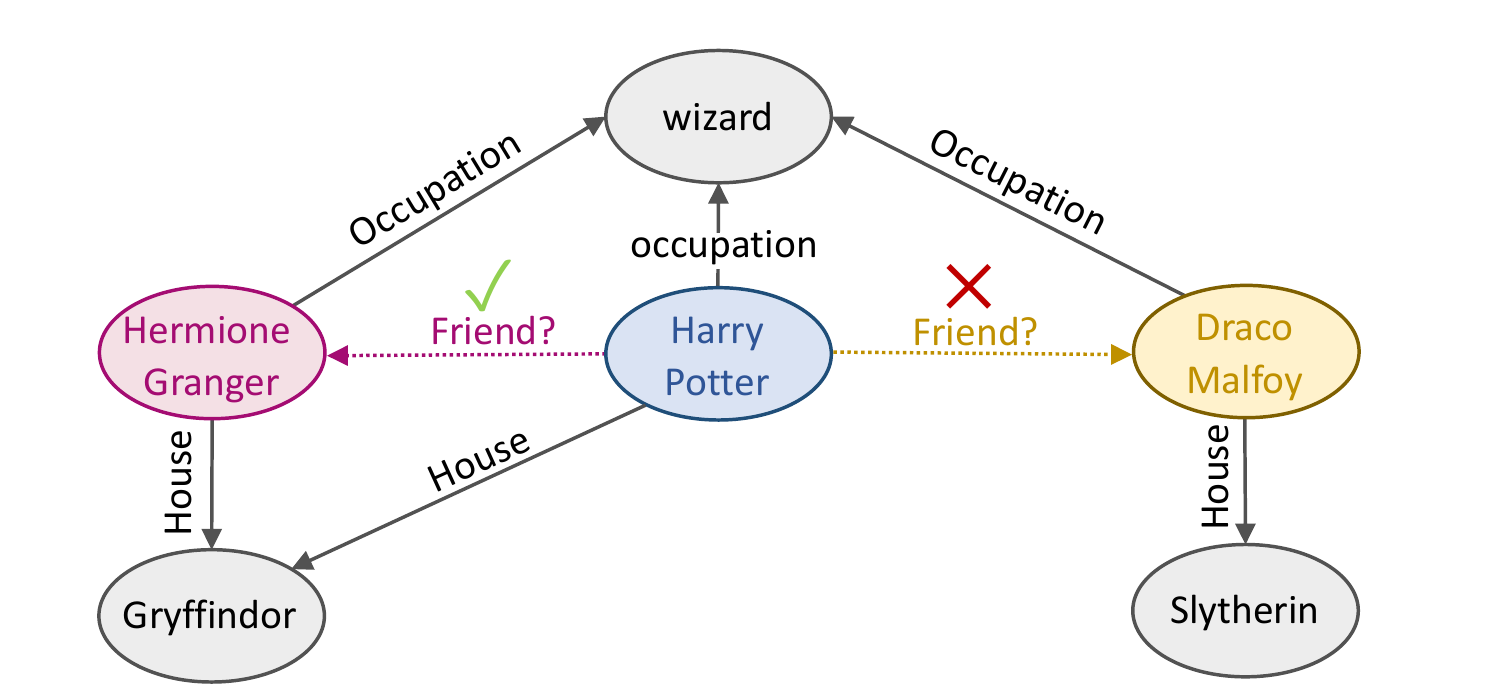}
			\caption{Two head entities \rl{Hermione Granger} and \rl{Draco Malfoy} have the same relational context $\{\rl{Occupation}, \rl{House}\}$, but different relational paths to the tail entity \rl{Harry Potter} \{(\rl{House}, \rl{House}), (\rl{Occupation}, \rl{Occupation})\} vs. \{(\rl{Occupation}, \rl{Occupation})\}, which allows our model to predict friendship between \rl{Harry Potter} and \rl{Hermione Granger} vs. \rl{Draco Malfoy}.}
			\label{fig:kg_2}
		\end{subfigure}
		\caption{(a) Relational context of an entity and (b) relational paths between entities. Our model is able to capture both.}
		\label{fig:intro}
	\end{figure}
	
	In general, relation types are not uniformly distributed over a KG but spatially correlated with each other.
	For example, the neighboring relations of ``\rl{graduated from}'' in the KG are more likely to be ``\rl{person.birthplace}'' and ``\rl{university.location}'' rather than ``\rl{movie.language}''.
	Therefore, for a given entity pair $(h, t)$, characterizing the relation types of neighboring links of $h$ and $t$ will provide valuable information when inferring the relation type between $h$ and $t$.
	Inspired by recent success of graph neural networks \cite{kipf2017semi, hamilton2017inductive, xu2019powerful}, we propose using \textit{message passing} to capture the neighborhood structure for a given entity pair.
	However, traditional message passing methods usually assume that messages are associated with nodes and messages are passed from nodes to nodes iteratively, which are not suitable for KGs where edge features (relation types) are more important.
	
	\xhdr{Relational message passing}
	To address the above limitation, we propose \textit{relational message passing} for KG completion.
	Unlike traditional node-based message passing, relational message passing \textit{only} considers edge features (relation types), and passes messages of an edge directly to its neighboring edges.
	Note that since relational message passing only models relations rather than entities, it brings three additional benefits compared with existing knowledge graph embedding methods \cite{bordes2013translating, trouillon2016complex, yang2015embedding, sun2019rotate, kazemi2018simple, zhang2019quaternion}:
	(1) it is \textit{inductive}, since it can handle entities that do not appear in the training data during inference stage;
	(2) it is \textit{storage-efficient}, since it does not calculate embeddings of entities; and
	(3) it is \textit{explainable}, since it is able to provide explainability for predicted results by modeling the correlation strength among relation types.
	However, a potential issue of relational message passing is that its computational complexity is significantly higher than node-based message passing (Theorem \ref{thm:2}).
	To solve this issue, we propose \textit{alternate relational message passing} that passes relational messages between nodes and edges \textit{alternately} over the KG.
	We prove that alternate message passing scheme greatly improves time efficiency and achieves \textit{the same order of computational complexity} as traditional node-based message passing (Theorem \ref{thm:1} and \ref{thm:3}).
	
	\xhdr{Relational context and relational paths}
	Under the alternate relational message passing framework, we explore two kinds of local subgraph topology for a given entity pair $(h, t)$ (see Figure \ref{fig:intro} for an illustrating example):
	(1) \textit{Relational context}.
	It is important to capture the neighboring relations of a given entity in the KG, because neighboring relations provide us with valuable information about what is the nature or the ``type'' of the given entity (Figure \ref{fig:kg_1}).
	Many entities in KGs are not typed or are very loosely typed, so being able to learn about the entity and its context in the KG is valuable.
	We design a multi-layer relational message passing scheme to aggregate information from multi-hop neighboring edges of $(h, t)$.
	(2) \textit{Relational paths}.
	Note that modeling only relational context is not able to identify the relative position of $(h, t)$.
	It is also important to capture the set of relational paths between $(h, t)$ (Figure \ref{fig:kg_2}).
	Here different paths of connections between the entities reveal the nature of their relationship and help with the prediction.
	Therefore, we calculate all relational paths connecting $h$ and $t$ in the KG and pass relational messages along these paths.
	Finally, we use an attention mechanism to selectively aggregate representations of different relational paths, then combine the above two modules together for relation prediction.
	
	\xhdr{Experiments}
    We conduct extensive experiments on five well-known KGs as well as a new KG proposed by us, \textit{DDB14 dataset}.
    Experimental results demonstrate that our proposed model \alg (short for relational PATHs and CONtext) significantly outperforms state-of-the-art KG completion methods, for example, the absolute Hit@1 gain over the best baseline is $16.7\%$ and $6.3\%$ on WN18RR and NELL995, respectively.
    Our ablation studies show the effectiveness of our approach and demonstrate the importance of relational context as well as relational paths.
    Our method is also shown to maintain strong performance in inductive KG completion, and it provides high explainability by identifying important relational context and relation paths for a given predicted relation.
    
    \xhdr{Contributions}
    Our key contributions are listed as follows:
    \begin{itemize}
    	\item We propose \textit{alternate relational message passing} framework for KG completion, which is \textit{inductive}, \textit{storage-efficient}, \textit{explainable}, and \textit{computationally efficient} compared with existing embedding-based methods;
    	\item Under the proposed framework, we explore two kinds of subgraph topology: \textit{relational context} and \textit{relational paths}, and show that they are critical to relation prediction;
    	\item We propose a new KG dataset DDB14 (Disease Database with 14 relation types) that is suitable for KG-related research.
    \end{itemize}

    \begin{figure}[t]
	    \centering
  	    \includegraphics[width=0.42\textwidth]{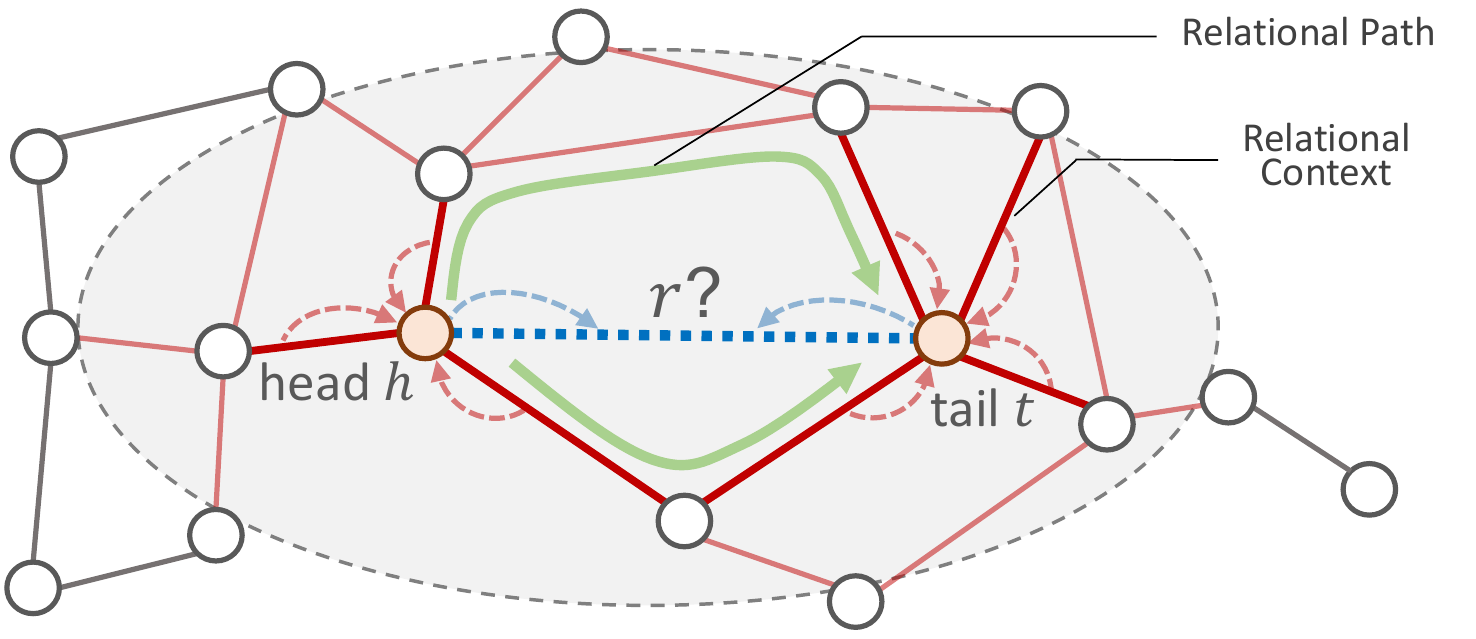}
  	    \caption{An example of \alg considering both the relational context within 2 hops of the head and the tail entities (denoted by red edges) and relational paths of length up to 3 relations that connect head to tail (denoted by green arrows). Context and paths are captured based on relation types (not entities) they contain. By combining the context and paths \alg predicts the probability of relation $r$.}
  	    \label{fig:model}
    \end{figure}

\section{Problem Formulation}
\label{sec:problem_formulation}
	Let $\mathcal G = (\mathcal V, \mathcal E)$ be an instance of a knowledge graph, where $\mathcal V$ is the set of nodes and $\mathcal E$ is the set of edges.
	Each edge $e$ has a relation type $r \in \mathcal R$.
	%Due to the sparsity of most real-world KGs, we treat $\mathcal G$ as an undirected graph.
	Our goal is to predict missing relations in $\mathcal G$, i.e., given an entity pair $(h, t)$, we aim to predict the relation of the edge between them.\footnote{Some of the related work formulates this problem as predicting the missing tail (head) entity given a head (tail) entity and a relation. The two problems are actually reducible to each other: Given a model $\Phi (\cdot | h, t)$ that outputs the distribution over relation types for an entity pair $(h, t)$, we can then build a model $\Gamma (\cdot | h, r) = \textsc{SoftMax}_t \left( \Phi (r | h, t) \right)$ that outputs the distribution over tail entities given $h$ and $r$, and vice versa. Since the two problems are equivalent, we only focus on relation prediction in this work.}
	Specifically, we aim to model the distribution over relation types given a pair of entities $(h, t)$: $p (r | h, t)$.
	This is equivalent to modeling the following term
	\begin{equation}
	    \label{eq:bayes}
		p (r | h, t) \propto p (h, t | r) \cdot p (r)
	\end{equation}
	according to Bayes' theorem.
	In Eq. (\ref{eq:bayes}), $p (r)$ is the prior distribution over relation types and serves as the regularization of the model.
	Then the first term can be further decomposed to
	\begin{equation}
	    \label{eq:bayes2}
		p (h, t | r) = \frac{1}{2} \Big( p (h | r) \cdot p (t | h, r) + p (t | r) \cdot p (h | t, r) \Big).
	\end{equation}
		
	Eq. (\ref{eq:bayes2}) sets up the guideline for designing our model.
	The term $p (h | r)$ or $p (t | r)$ measures the likelihood of an entity given a particular relation.
	Since our model does not consider the identity of entities, we use an entity's \textit{local relational subgraph} instead to represent the entity itself, i.e., $p \big( C(h) | r \big)$ and $p \big( C(t) | r \big)$ where $C(\cdot)$ denotes the local relational subgraph of an entity.
	This is also known as \textit{relational context} for $h$ and $t$.
    The term $p (t | h, r)$ or $p (h | t, r)$ in Eq. (\ref{eq:bayes2}) measures the likelihood of how $t$ can be reached from $h$ or the other way around given that there is a relation $r$ between them.
	This inspires us to model the \textit{relational paths} between $h$ and $t$ in the KG.
	In the following we show how to model the two factors in our method and how they contribute to relation prediction.
		
	\begin{table}[t]
		\centering
		\small
		\setlength{\tabcolsep}{10pt}
		\begin{tabular}{c|c}
			\hline
		    Symbol & Description \\
        	\hline
    		%$C(e)$ & Context of the edge $e$ \\
    		$h, t$ & Head entity and tail entity \\
    		$r$ & Relation type \\
        	$s_e^i$ & Hidden state of edge $e$ at iteration $i$ \\
    		$m_v^i$ & Message of node $v$ at iteration $i$ \\
        	$\mathcal N(e)$ & Endpoint nodes of edge $e$ \\
        	$\mathcal N(v)$ & Neighbor edges of node $v$ \\
        	$s_{(h,t)}$ & Context representation of the entity pair ($h$,$t$) \\
        	$s_{h\to t}$ & Path representation of all paths from $h$ to $t$ \\
        	$\alpha_P$ & Attention weight of path $P$ \\
        	$\mathcal P_{h\rightarrow t}$ & Set of paths from $h$ to $t$ \\
        	\hline
		\end{tabular}
		\vspace{0.05in}
		\caption{Notation used in this paper.}
		\vspace{-0.2in}
		\label{table:symbols}
	\end{table}

\section{Our Approach}
	In this section, we first introduce the relational message passing framework, then present two modules of the proposed \alg: relational context message passing and relational path message passing.
	Notations used in this paper are listed in Table \ref{table:symbols}.
	
	\subsection{Relational Message Passing Framework}
	\label{sec:framework}
		\textbf{Traditional node-based message passing}.
		We first briefly review traditional node-based message passing method for general graphs.
		Assume that each node $v$ is with feature $x_v$.
		Then the message passing runs for multiple timesteps over the graph, during which the hidden state $s^i_v$ of each node $v$ in iteration $i$ is updated by
		\begin{align}
			m^i_v =& A \left( \big \{ s^i_u \big \}_{u \in \mathcal N (v)} \right), \label{eq:node_based_mp_1} \\
			s^{i+1}_v =& U \left( s^i_v, m^i_v \right), \label{eq:node_based_mp_2}
		\end{align}
		where $m^i_v$ is the message received by node $v$ in iteration $i$, $\mathcal N(v)$ denotes the set of neighbor nodes of $v$ in the graph, $A(\cdot)$ is message aggregation function, and $U(\cdot)$ is node update function.
		The initial hidden state $s^0_v = x_v$.
		%For example, in GCN \cite{kipf2017semi} the MP scheme is $m^i_v = \sum_{u \in \mathcal N (v)} s^i_v$ and $s^{i+1}_v = \big( s^i_v + m^i_v \big) W + b$ where $W$ and $b$ are trainable transformation matrix and bias.
		%Since here hidden states are assigned to nodes, we call Eqs. (\ref{eq:node_based_mp_1}) and (\ref{eq:node_based_mp_2}) \textit{node-based message passing}.
		%See Figure \ref{fig:mp_1} for an illustrative example.
		
		The above framework, though popular for general graphs and has derived many variants such as GCN \cite{kipf2017semi}, GraphSAGE \cite{hamilton2017inductive}, and GIN \cite{xu2019powerful}, faces the following challenges when applied to knowledge graphs:
		(1) Unlike general graphs, in most KGs, edges have features (relation types) but nodes don't, which makes node-based message passing less natural for KGs.
		Though node features can be set as their identities (i.e., one-hot vectors), this will lead to another two issues:
		(2) Modeling identity of nodes cannot manage previously unseen nodes during inference and fails in inductive settings.
		(3) In real-world KGs, the number of entities are typically much larger than the number of relation types, which requires large memory for storing entity embeddings.
		
		\xhdr{Relational message passing}
		To address the above problems, a natural thought is to perform message passing over edges instead of nodes:
		\begin{align}
			m^i_e =& A \left( \big \{ s^i_{e'} \big \}_{e' \in \mathcal N (e)} \right), \label{eq:relational_mp_1} \\
			s^{i+1}_e =& U \left( s^i_e, m^i_e \right), \label{eq:relational_mp_2}
		\end{align}
		where $\mathcal N(e)$ denotes the set of neighbor edges of $e$ (i.e., edges that share at lease one common end-point with $e$) in the graph, and $s^0_e = x_e$ is the initial edge feature of $e$, i.e., the relation type.
		Therefore, Eqs. (\ref{eq:relational_mp_1}) and (\ref{eq:relational_mp_2}) are called \textit{relational message passing}.
					
		Relational message passing avoids the drawbacks of node-based message passing, however, it brings a new issue of computational efficiency when passing messages.
		To see this, we analyze the computational complexity of the two message passing schemes (proofs are given in Appendix \ref{sec:proof_1} and \ref{sec:proof_2}):		
		\begin{theorem}[Complexity of node-based message passing]
		\label{thm:1}
			Consider a graph with $N$ nodes and $M$ edges.
			The expected cost of node-based message passing (Eqs. (\ref{eq:node_based_mp_1}) and (\ref{eq:node_based_mp_2})) in each iteration is $2M + 2N$.
 		\end{theorem}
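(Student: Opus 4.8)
The plan is to account separately for the two operations that make up one iteration of node-based message passing, namely the aggregation step of Eq.~(\ref{eq:node_based_mp_1}) and the update step of Eq.~(\ref{eq:node_based_mp_2}), and to sum their costs under a unit-cost model in which every neighbor processed by $A(\cdot)$ counts as one operation and every argument consumed by $U(\cdot)$ counts as one operation. Since $M$ and $N$ are fixed by the graph, the resulting count is purely combinatorial; the target $2M + 2N$ then decomposes naturally as an aggregation contribution of $2M$ plus an update contribution of $2N$.

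First I would bound the aggregation cost. To form the message $m^i_v$, node $v$ must read the hidden state $s^i_u$ of each neighbor $u \in \mathcal N(v)$, so the cost of Eq.~(\ref{eq:node_based_mp_1}) at node $v$ is exactly $\deg(v) = |\mathcal N(v)|$. Summing over all nodes, the total aggregation cost is $\sum_{v \in \mathcal V} \deg(v)$, and the one nontrivial ingredient is the handshaking lemma: every edge has two endpoints and hence contributes $1$ to the degree of each, so $\sum_{v \in \mathcal V} \deg(v) = 2M$. This gives an aggregation cost of exactly $2M$.

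Next I would count the update cost. Each of the $N$ nodes performs one update $s^{i+1}_v = U(s^i_v, m^i_v)$ per iteration, and this update consumes two arguments, the previous state $s^i_v$ and the freshly aggregated message $m^i_v$, so under the chosen convention it costs $2$ units per node and $2N$ units in total. Adding the two contributions yields the claimed per-iteration cost $2M + 2N$. Because $M$ and $N$ are determined by the graph, this count carries no randomness and therefore coincides with its own expectation, which is why no averaging over a degree distribution is required here, in contrast with the relational scheme analyzed in Theorem~\ref{thm:2}.

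The argument is elementary, so the main delicate point will not be any inequality but rather fixing the cost model precisely enough that the two parts come out to $2M$ and $2N$ rather than, say, $2M$ and $N$. The only substantive mathematical fact is the handshaking lemma; once I adopt the explicit convention that $U(\cdot)$ costs $2$ per node, matching its two-argument form, the remainder is a direct summation. I would state this convention up front so the accounting stays consistent with the companion results of Theorems~\ref{thm:2} and~\ref{thm:3}.
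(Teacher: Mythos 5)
Your proposal is correct and follows essentially the same argument as the paper: both decompose the per-iteration cost into an aggregation contribution of $2M$ and an update contribution of $2N$ over the $N$ nodes. The only cosmetic difference is that you obtain $2M$ by summing degrees exactly via the handshaking lemma, whereas the paper writes the same quantity as $N \cdot \mathbb{E}[d] = N \cdot \frac{2M}{N}$; the two computations are identical, and your remark that the count is deterministic (so the expectation is vacuous) is a fair clarification rather than a new idea.
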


 		\begin{theorem}[Complexity of relational message passing]
 		\label{thm:2}
 			Consider a graph with $N$ nodes and $M$ edges.
			The expected cost of relational message passing (Eqs. (\ref{eq:relational_mp_1}) and (\ref{eq:relational_mp_2})) in each iteration is $N \cdot {\rm Var}[d] + \frac{4M^2}{N}$, where ${\rm Var}[d]$ is the variance of node degrees in the graph.
 		\end{theorem}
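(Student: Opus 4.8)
The plan is to count the per-iteration cost in exactly the same way as in Theorem~\ref{thm:1}, namely as the total number of hidden states read by the aggregation function $A$ (Eq.~(\ref{eq:relational_mp_1})) plus those read by the update function $U$ (Eq.~(\ref{eq:relational_mp_2})). The update step is the easy part: every edge $e$ reads its own state $s^i_e$ together with its incoming message $m^i_e$, so the $M$ edges contribute exactly $2M$ to the cost. The real work is the aggregation term $\sum_{e} |\mathcal N(e)|$, in which each edge reads one state per neighboring edge. So first I would reduce the whole theorem to evaluating $\sum_e |\mathcal N(e)|$.

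To compute $|\mathcal N(e)|$, fix an edge $e$ with endpoints $u$ and $v$ of degrees $d_u, d_v$. The edges sharing the endpoint $u$ (other than $e$ itself) number $d_u - 1$, and likewise $d_v - 1$ for $v$; assuming a simple graph these two families are disjoint apart from $e$, so $|\mathcal N(e)| = d_u + d_v - 2$. Summing over all edges gives $\sum_e |\mathcal N(e)| = \sum_{e=(u,v)}(d_u + d_v) - 2M$.

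The key combinatorial identity is that $\sum_{e=(u,v)}(d_u+d_v) = \sum_v d_v^2$: each node $v$ is an endpoint of exactly $d_v$ edges, and in each of these edges its degree $d_v$ is added once, so $v$ contributes $d_v \cdot d_v$ to the sum. Hence the aggregation cost is $\sum_v d_v^2 - 2M$, and adding the $2M$ from the update step makes the two $2M$ terms cancel, leaving a total per-iteration cost of exactly $\sum_v d_v^2$.

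Finally I would translate $\sum_v d_v^2$ into the variance form. Writing $\bar d = \frac1N\sum_v d_v = 2M/N$ by the handshake lemma and using ${\rm Var}[d] = \frac1N\sum_v d_v^2 - \bar d^2$, we obtain $\sum_v d_v^2 = N\,{\rm Var}[d] + N\bar d^2 = N\,{\rm Var}[d] + \frac{4M^2}{N}$, as claimed. The step needing the most care is the neighbor count $|\mathcal N(e)|$: one must verify that the self/overlap terms are exactly $-2$ per edge so that the aggregation correction $-2M$ precisely cancels the update cost $+2M$. Everything else is the bookkeeping identity $\sum_{e=(u,v)}(d_u+d_v)=\sum_v d_v^2$ and the definition of variance over the empirical node-degree distribution.
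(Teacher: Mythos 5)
Your proof is correct and follows essentially the same counting as the paper's: the aggregation cost $\sum_e |\mathcal N(e)| = \sum_v d_v^2 - 2M$, the update cost $2M$ cancelling the $-2M$, and the variance identity $\sum_v d_v^2 = N\,{\rm Var}[d] + \frac{4M^2}{N}$ via $\mathbb E[d] = 2M/N$. The only cosmetic difference is that the paper packages the neighbor count as a lemma about the line graph $L(\mathcal G)$ (computing its edge count as $\sum_i \binom{d_i}{2}$ and hence its expected degree $\frac{N \cdot {\rm Var}[d]}{M} + \frac{4M}{N} - 2$), whereas you count $|\mathcal N(e)| = d_u + d_v - 2$ directly per edge; the arithmetic is identical.
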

	
		\xhdr{Alternate relational message passing}
		According to the above theorems, the complexity of relational message passing is much higher than node-based message passing, especially in real-world graphs where node distribution follows the power law distribution whose variance (${\rm Var}[d]$) is extremely large due to the long tail.
		To reduce the redundant computation in relational message passing and improve its computational efficiency, we propose the following message passing scheme for KGs:
		\begin{align}
			m^i_v =& A_1 \left( \big \{ s^i_{e} \big \}_{e \in \mathcal N (v)} \right), \label{eq:alternate_mp_1} \\
			m^i_e =& A_2 \left( m^i_v, m^i_u \right), \ v, u \in \mathcal N(e), \label{eq:alternate_mp_2} \\
			s^{i+1}_e =& U \left( s^i_e, m^i_e \right). \label{eq:alternate_mp_3}
		\end{align}
		We decompose edge aggregation in Eq. (\ref{eq:relational_mp_1}) into two steps as Eqs. (\ref{eq:alternate_mp_1}) and (\ref{eq:alternate_mp_2}).
		In Eq. (\ref{eq:alternate_mp_1}), for each node $v$, we aggregate all the edges that $v$ connects to by an aggregation function $A_1(\cdot)$ and get message $m^i_v$, where $\mathcal N(v)$ denotes the set of neighbor edges for node $v$.
		Then in Eq. (\ref{eq:alternate_mp_2}), we obtain message $m^i_e$ of edge $e$ by aggregating messages from its two end-points $v$ and $u$ using function $A_2(\cdot)$, where $\mathcal N(e)$ denotes the set of neighbor nodes for edge $e$.
		The hidden state of edge $e$ is finally updated using the message $m^i_e$ as in Eq. (\ref{eq:alternate_mp_3}).
		
		An intuitive understanding of alternate relational message passing is that nodes here serve as ``distribution centers'' that collect and temporarily store the messages from their neighbor edges, then propagate the aggregated messages back to each of their neighbor edges.
		Therefore, we call Eqs. (\ref{eq:alternate_mp_1})-(\ref{eq:alternate_mp_3}) \textit{alternate relational message passing}, as messages are passed alternately between nodes and edges.
					
		The complexity of alternate relational message passing is given as follows (proof is given in Appendix \ref{sec:proof_3}):
		
		\begin{theorem}[Complexity of alternate relational message passing]
		\label{thm:3}
			Consider a graph with $N$ nodes and $M$ edges.
 			The expected cost of alternate relational message passing (Eqs. (\ref{eq:alternate_mp_1})-(\ref{eq:alternate_mp_3})) in each iteration is $6M$.
 		\end{theorem}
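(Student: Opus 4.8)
The plan is to adopt the same cost-accounting convention that produces Theorems~\ref{thm:1} and~\ref{thm:2}: in one iteration we tally the total number of inputs consumed across all evaluations of the aggregation functions ($A_1$, $A_2$) and the update function ($U$). Under this convention a single call $A(\{s_u\}_{u\in\mathcal S})$ contributes $|\mathcal S|$ and a single call $U(x,y)$ contributes $2$, which is exactly what yields $2M+2N$ for node-based passing and $N\cdot{\rm Var}[d]+\frac{4M^2}{N}$ for relational passing (in the latter, each edge aggregates its $d_u+d_v-2$ neighbor edges, and $\sum_{e=(u,v)}(d_u+d_v)=\sum_v d_v^2$). I would then simply sum the contributions of the three steps Eqs.~(\ref{eq:alternate_mp_1})--(\ref{eq:alternate_mp_3}).

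For Eq.~(\ref{eq:alternate_mp_1}), each node $v$ evaluates $A_1$ over its incident edges $\mathcal N(v)$, consuming $|\mathcal N(v)| = d_v$ inputs; summing over all nodes and invoking the handshake lemma $\sum_{v\in\mathcal V} d_v = 2M$ gives a total of $2M$. For Eq.~(\ref{eq:alternate_mp_2}), each edge $e$ evaluates $A_2$ on exactly the two messages of its endpoints, contributing $2$ regardless of the local degrees, so over all $M$ edges this is $2M$. For Eq.~(\ref{eq:alternate_mp_3}), each edge evaluates $U(s^i_e, m^i_e)$ on two arguments, again $2$ per edge for a total of $2M$. Adding the three terms yields $2M+2M+2M = 6M$, as claimed.

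There is no deep obstacle here; the argument is an exact count rather than a genuine expectation, and the word ``expected'' should be read as expressing the cost through the (population) degree distribution, exactly as in Theorem~\ref{thm:2}. The one point that deserves care --- and that is the entire source of the speedup over Theorem~\ref{thm:2} --- is verifying that Eq.~(\ref{eq:alternate_mp_2}) is genuinely $O(1)$ per edge. In the undecomposed scheme of Eq.~(\ref{eq:relational_mp_1}) each edge $e=(u,v)$ re-aggregates all $d_u+d_v-2$ of its neighbor edges, and summing $d_u+d_v$ over edges produces the quadratic term $\sum_v d_v^2 = N\cdot{\rm Var}[d]+\frac{4M^2}{N}$. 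The alternate scheme instead computes each node message once (the $\sum_v d_v = 2M$ of the first step) and lets every edge read back only its two endpoint messages, so the expensive $\sum_v d_v^2$ term never arises. I would make this contrast explicit, since it is the conceptual content of the theorem: the two-stage factorization replaces a degree-squared cost by three passes that are each linear in $M$.
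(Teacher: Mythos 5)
Your proof is correct and takes essentially the same approach as the paper's: counting inputs consumed per step, you get $2M$ for the edge-to-node aggregation (Eq.~(\ref{eq:alternate_mp_1})), $2M$ for the node-to-edge aggregation (Eq.~(\ref{eq:alternate_mp_2})), and $2M$ for the update (Eq.~(\ref{eq:alternate_mp_3})), summing to $6M$. Your handshake-lemma sum $\sum_v d_v = 2M$ is just the exact form of the paper's $N \cdot \mathbb{E}[d] = 2M$, and your concluding contrast with Theorem~\ref{thm:2} is a welcome but inessential addition.
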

 		
 		From Theorem \ref{thm:3} it is clear to see that alternate relational message passing greatly reduces the time overhead and achieves the same order of complexity as node-based message passing.
 		
 		\xhdr{Remarks}
 		We present the following two remarks to provide more insight on the proposed framework:
 		
 		\begin{remark}[Relationship with belief propagation]
 		\label{remark:1}
			Alternate relational message passing is conceptually related to belief propagation (BP) \cite{yedidia2003understanding}, which is also a message-passing algorithm that passes messages between nodes and edges.
			But note that they are significantly different in:
			(1) application fields.
			BP is used to calculate the marginal distribution of unobserved variables in a graphical model, while our method aims to predict the edge type in KGs;
			(2) the purpose of using edge-node alternate message passing.
			BP uses this because of the special structure of factor graphs, while we use this to reduce the computational overhead.
		\end{remark}
		
		\begin{remark}[Utilizing node features]
		\label{remark:2}
			Though our proposed framework is claimed to only use edge features, it can be easily extended to the case where node features are present and assumed to be important, by additionally including the feature vector of node $v$ in Eq. (\ref{eq:alternate_mp_1}), i.e., $m^i_v = A_1 \left( \big \{ s^i_{e} \big \}_{e \in \mathcal N (v)}, x_v \right)$, where $x_v$ is the feature of node $v$.
			As long as node features do not contain node identities, our proposed framework is still inductive.
			We do not empirically study the performance of our method on node-feature-aware cases, because node features are unavailable for all datasets used in this paper.
			We leave the exploration of this extension to future work.
		\end{remark}

 	\subsection{Relational Context}
	\label{sec:context}
	    For a KG triplet $(h, r, t)$, relational context of $h$ and $t$ is usually highly correlated with $r$.
	    For example, if $r$ is ``graduated from'', it's reasonable to guess that the surrounding relations of $h$ are ``person.birthplace'', ``person.gender'', etc., and the surrounding relations of $t$ are ``institution.location'', ``university.founder'', ``university.president'', etc.
	    Therefore, the context of $h$ and $t$ will provide valuable clues when identifying the relation type of the edge between them, and here we use the proposed message passing method to learn from relational context.
	    
	    Denote $s_e^i$ as the hidden state of edge $e$ in iteration $i$, and $m_v^i$ as the message stored at node $v$ in iteration $i$.
	    We instantiate the alternate relational message passing in Eqs. (\ref{eq:alternate_mp_1})-(\ref{eq:alternate_mp_3}) to learn the representation of each edge:
	    \begin{align}
			m^i_v =& \sum\nolimits_{e \in \mathcal N (v)} s^i_{e}, \label{eq:context_mp_1} \\
			s^{i+1}_e =& \sigma \left( \left[ m^i_v, m^i_u, s^i_e \right] \cdot W^i + b^i \right), \ v, u \in \mathcal N(e),\label{eq:context_mp_2}
		\end{align}
		where $[\cdot]$ is the concatenation function, $W^i$, $b^i$, and $\sigma(\cdot)$ are the learnable transformation matrix, bias, and nonlinear activation function, respectively.\footnote{We shall discuss other implementations of Eqs. (\ref{eq:alternate_mp_1})-(\ref{eq:alternate_mp_3}) in Section \ref{sec:design_alternatives} and examine their performance in experiments.}
		$s_e^0 = x_e$ is initial feature of edge $e$, which can be taken as the one-hot identity vector of the relation type that $e$ belongs to.\footnote{In cases where relation types have names, initial features can also be bag-of-words (BOW) or sentence embeddings learned by language models like BERT \cite{devlin2018bert}.
		We shall investigate the performance of different initial feature types in experiments.}
		
		Relational context message passing in Eqs. (\ref{eq:context_mp_1}) and (\ref{eq:context_mp_2}) are repeated for $K$ times.
		The final message $m_h^{K-1}$ and $m_t^{K-1}$ are taken as the representation for head $h$ and tail $t$, respectively.
		We also give an illustrative example of relational context message passing in Figure \ref{fig:model}, where the red/pink edges denote the first-order/second-order contextual relations.

	\subsection{Relational Paths}
	\label{sec:path}
		We follow the discussion in Section \ref{sec:problem_formulation} and discuss how to model the term $p(t | h, r)$ or $p(h | t, r)$.
		Note that we do not consider node/edge identity in relational context message passing, which leads to a potential issue that our model is not able to identify the relative position between $h$ and $t$ in the KG.
		For example, suppose for a given entity pair $(h, t)$, $h$ is surrounded by ``person.birthplace'', ``person.gender'', etc., and $t$ is surrounded by ``institution.location'', ``university.founder'', ``university.president'', etc.
		Then it can be inferred that $h$ is probably a person and $t$ is probably a university, and there should be a relation ``graduated\_from'' between them because such a pattern appears frequently in the training data.
		However, the person may have no relationship with the university and they are far from each other in the KG.
		The reason why such false positive case happens is that relational context message passing can only detect the ``type'' of $h$ and $t$, but is not aware of their relative position in the KG.
		
		To solve this problem, we propose to explore the connectivity pattern between $h$ and $t$, which are represented by the paths connecting them in the KG.
		Specifically, a raw path from $h$ to $t$ in a KG is a sequence of entities and edges: $h(v_0) \xrightarrow{e_0} v_1 \xrightarrow{e_1} v_2 \cdots v_{L-1} \xrightarrow{e_{L-1}} t(v_L)$, in which two entities $v_i$ and $v_{i+1}$ are connected by edge $e_i$, and each entity in the path is unique.\footnote{Entities in a path are required to be unique because a loop within a path does not provide additional semantics thus should be cut off from the path.}
		The corresponding relational path $P$ is the sequence of relation types of all edges in the given raw path, i.e., $P = \left( r_{e_0}, r_{e_1}, ..., r_{e_{L-1}} \right)$, where $r_{e_i}$ is the relation type of edge $e_i$.
		Note that we do not use the identity of nodes when modeling relational paths, which is the same as for relational context.
		
		Denote $\mathcal P_{h \rightarrow t}$ as the set of all relational paths from $h$ to $t$ in the KG.
		Our next step is to define and calculate the representation of relational paths.
		In \alg, we assign an independent embedding vector $s_P$ for each relational path $P \in \mathcal P_{h \rightarrow t}$.\footnote{Other methods for calculating path representations are also possible. We shall discuss them in Section \ref{sec:design_alternatives}.}
		A potential concern here is that the number of different paths increases exponentially with the path length (there are $|r|^k$ $k$-hop paths), however, in practice we observe that in real-world KGs most paths actually do not occur (e.g., only 3.2\% of all possible paths of length 2 occur in FB15K dataset), and the number of different paths is actually quite manageable for relatively small values of $k$ ($k\le4$).
		
		An illustrative example of relational paths is shown in Figure \ref{fig:model}, where the two green arrows denote the relational paths from head entity $h$ to tail entity $t$.

	\subsection{Combining Relational Context and Paths}
	\label{sec:model}
	    For relational context, we use massage passing scheme to calculate the final message $m_h^{K-1}$ and $m_t^{K-1}$ for $h$ and $t$, which summarizes their context information, respectively.
	    $m_h^{K-1}$ and $m_t^{K-1}$ are further combined together for calculating the context of $(h, t)$ pair:
		\begin{equation}
		\label{eq:context}
			s_{(h, t)} = \sigma \left( \left[ m^{K-1}_h, m^{K-1}_t \right] \cdot W^{K-1} + b^{K-1} \right),
		\end{equation}
		where $s_{(h, t)}$ denotes the context representation of the entity pair $(h, t)$.
		Note here that Eq. (\ref{eq:context}) should only take messages of $h$ and $t$ as input without their connecting edge $r$, since the ground truth relation $r$ should be treated unobserved in the training stage.
		
		For relational paths, note that there may be a number of relational paths for a given $(h, t)$ pair, but not all paths are logically related to the predicted relation $r$, and the importance of each path also varies.
		In \alg, since we have already known the context $s_{(h, t)}$ for $(h, t)$ pair and it can be seen as prior information for paths between $h$ and $t$, we can calculate the importance scores of paths based on $s_{(h, t)}$.
		Therefore, we first calculate the attention weight of each path $P$ with respect to the context $s_{(h, t)}$:
		\begin{equation}
		\label{eq:weights}
			\alpha_P = \frac{\exp \left( {s_P}^\top s_{(h, t)} \right)}{\sum_{P \in \mathcal P_{h \rightarrow t}} \exp \left( {s_P}^\top s_{(h, t)} \right)},
		\end{equation}
		where $\mathcal P_{h \rightarrow t}$ is the set of all paths from $t$ to $t$.
		Then the attention weights are used to average representations of all paths:
		\begin{equation}
		\label{eq:attention}
			s_{h \rightarrow t} = \sum\nolimits_{P \in \mathcal P_{h \rightarrow t}} \alpha_P s_P,
		\end{equation}
		where $s_{h \rightarrow t}$ is the aggregated representation of relational paths for $(h, t)$.
		In this way, the context information $s_{(h, t)}$ is used to assist in identifying the most important relational paths.
		
		Given the relational context representation $s_{(h,t)}$ and the relational path representation $s_{h \rightarrow t}$, we can predict relations by first adding the two representation together and then taking softmax as follows:
		\begin{equation}
		    p(r|h,t)=\textsc{SoftMax} \left( s_{(h,t)} + s_{h\to t} \right).
		\end{equation}

		Our model can be trained by minimizing the loss between predictions and ground truths over the training triplets:
		\begin{equation}
		\label{eq:loss}
			\min \mathcal L = \sum_{(h, r, t) \in \mathcal D} J \big( p(r|h, t), \ r \big),
		\end{equation}
		where $\mathcal D$ is the training set and $J(\cdot)$ is the cross-entropy loss.
		
		It is worth noticing that the context representation $s_{(h, t)}$ plays two roles in the model:
		It directly contributes to the predicted relation distribution, and it also helps determine the importance of relational paths with respect to the predicted relation.

	\subsection{Discussion on Model Explainability}
	\label{sec:explain}
		Since \alg only models relations without entities, it is able to capture pure relationship among different relation types thus can naturally be used to explain for predictions.
		The explainability of \alg is two-fold:
		
		On the one hand, modeling relational context captures the correlation between contextual relations and the predicted relation, which can be used to indicate important neighbor edges for the given relation.
		%This can be achieved by studying the transformation matrix in context message passing or using external explanation tools \cite{ying2019gnnexplainer}.
		For example, ``institution.location'', ``university.founder'', and ``university.president'' can be identified as important contextual relations for ``graduated from''.
		
		On the other hand, modeling relational paths captures the correlation between paths and the predicted relation, which can indicate important relational paths for the given relation.
		%This can be achieved by studying the transformation matrix or attention weights in path modeling.
		For example, (``schoolmate of'', ``graduated from'') can be identified as an important relational path for ``graduated from''.
		
		It is interesting to see that the explainability provided by relational paths is also connected to first-logic logical rules with the following form:
		\begin{equation*}
			B_1(h, x_1) \wedge B_2 (x_1, x_2) \wedge \cdots \wedge B_L(x_{L-1}, t) \Rightarrow r(h, t),
		\end{equation*}
		where $\bigwedge B_i$ is the conjunction of relations in a path and $r(h, t)$ is the predicted relation.
		The above example of relational path can therefore be written as the following rule:
		\begin{equation*}
		\begin{split}
		    &(h, \ \textsf{schoolmate of}, \ x) \wedge (x, \ \textsf{graduated from}, \ t) \\
		    \Rightarrow & (h, \ \textsf{graduated from}, \ t).
		\end{split}
		\end{equation*}
		Therefore, \alg can also be used to learn logical rules from KGs just as prior work \cite{galarraga2015fast, yang2017differentiable, ho2018rule, zhang2019iteratively, sadeghian2019drum}.

	\subsection{Design Alternatives}
    \label{sec:design_alternatives}
        Next we discuss several design alternatives for \alg.
        In our ablation experiments we will compare \alg with the following alternative implementations.
        
        When modeling relational context, we propose two alternatives for context aggregator, instead of the Concatenation context aggregator in Eqs. (\ref{eq:context_mp_2}) and (\ref{eq:context}):
        
        \xhdr{Mean context aggregator}
        It takes the element-wise mean of the input vectors, followed by a nonlinear transformation function:
		\begin{equation}
			s^{i+1}_e = \sigma \left( \frac{1}{3} \big( m^i_v + m^i_u + s^i_e \big) W + b \right), \ v, u \in \mathcal N(e),
		\end{equation}
		The output of Mean context aggregator is invariant to the permutation of its two input nodes, indicating that it treats the head and the tail equally in a triplet.
		
		\xhdr{Cross context aggregator}
		It is inspired by combinatorial features in recommender systems \cite{wang2019multi}, which measure the interaction of unit features (e.g., AND(gender=\textsf{female}, language=\textsf{English})).
		Note that Mean and Concatenation context aggregator simply transform messages from two input nodes separately and add them up together, without modeling the interaction between them that might be useful for link prediction.
		In Cross context aggregator, we first calculate all element-level pairwise interactions between messages from the head and the tail:
		\begin{equation}
			m^i_v {m^i_u}^\top =
			\begin{bmatrix}
   				{m^i_v}^{(1)} {m^i_u}^{(1)} & \cdots & {m^i_v}^{(1)} {m^i_u}^{(d)}\\
   				\cdots & & \cdots\\
   				{m^i_v}^{(d)} {m^i_u}^{(1)} & \cdots & {m^i_v}^{(d)} {m^i_u}^{(d)}
  			\end{bmatrix},
		\end{equation}
		where we use superscript with parentheses to indicate the element index and $d$ is the dimension of $m^i_v$ and $m^i_u$.
		Then we summarize all interactions together via flattening the interaction matrix to a vector then multiplied by a transformation matrix:
		\begin{equation}
			s^{i+1}_e = \sigma \left( {\rm flatten} \big( m^i_v {m^i_u}^\top \big) W_1^i + s^i_e W_2^i + b^i \right), \ v, u \in \mathcal N(e).
		\end{equation}
		It is worth noting that Cross context aggregator preserves the order of input nodes.

		\xhdr{Learning path representation with RNN}
		When modeling relational paths, recurrent neural network (RNN) can be used to learn the representation of relational path $P = (r_1, r_2, ...)$:
		\begin{equation}
			s_P = {\rm RNN} \left( r_1, r_2, ... \right),
		\end{equation}
		instead of directly assigning an embedding vector to $P$.
		The advantage of RNN against path embedding is that its number of parameters is fixed and does not depend on the number of relational paths.
		Another potential benefit is that RNN can hopefully capture the similarity among different relational paths.

		\xhdr{Mean path aggregator}
		When calculating the final representation of relational paths for $(h, t)$ pair, we can also simply average all the representations of paths from $h$ to $t$ instead of the Attention path aggregator in Eqs. (\ref{eq:weights}) and (\ref{eq:attention}):
		\begin{equation}
			s_{h \rightarrow t} = \sum\nolimits_{P \in \mathcal P_{h \rightarrow t}} s_P.
		\end{equation}
		Mean path aggregator can be used in the case where representation of relational context is unavailable, since it does not require attention weights as input.

\section{Experiments}
	In this section, we evaluate the proposed \alg model, and present its performance on six KG datasets.
   \subsection{Experimental Setup}
		
    	\textbf{Datasets}.
    	We conduct experiments on five standard KG benchmarks: \textbf{FB15K}, \textbf{FB15K-237}, \textbf{WN18}, \textbf{WN18RR}, \textbf{NELL995}, and one KG dataset proposed by us: \textbf{DDB14}.
    	
    	\textbf{FB15K} \cite{bordes2011learning} is from Freebase \cite{bollacker2008freebase}, a large-scale KG of general human knowledge.
    	\textbf{FB15k-237} \cite{toutanova2015observed} is a subset of FB15K where inverse relations are removed.
    	\textbf{WN18} \cite{bordes2011learning} contains conceptual-semantic and lexical relations among English words from WordNet \cite{miller1995wordnet}.
    	\textbf{WN18RR} \cite{dettmers2018convolutional} is a subset of WN18 where inverse relations are removed.
    	\textbf{NELL995} \cite{xiong2017deeppath} is extracted from the 995th iteration of the NELL system \cite{carlson2010toward} containing general knowledge.
    	
    	In addition, we present a new  dataset \textbf{DDB14} that is suitable for KG-related tasks.
    	DDB14 is collected from Disease Database\footnote{\url{http://www.diseasedatabase.com}}, which is a medical database containing terminologies and concepts such as diseases, symptoms, drugs, as well as their relationships.
    	We randomly sample two subsets of 4,000 triplets from the original one as validation set and test set, respectively.
    	
    	The statistics of the six datasets are summarized in Table \ref{table:statistics}.
    	We also calculate and present the mean and variance of node degree distribution (i.e., $\mathbb E[d]$ and ${\rm Var}[d]$) for each KG.
    	It is clear that ${\rm Var}[d]$ is large for all KGs, which empirically demonstrates that the complexity of relational message passing is fairly high, thus alternate relational message passing is necessary for real graphs.
    	
    	\begin{table}[t]
			\centering
			\small
			\setlength{\tabcolsep}{2pt}
			\begin{tabular}{c|cccccc}
				\hline
				& FB15K & FB15K-237 & WN18 & WN18RR & NELL995 & DDB14 \\
				\hline
				\#nodes & 14,951 & 14,541 & 40,943 & 40,943 & 63,917 & 9,203 \\
				\#relations & 1,345 & 237 & 18 & 11 & 198 & 14 \\
				\#training & 483,142 & 272,115 & 141,442 & 86,835 & 137,465 & 36,561 \\
				\#validation & 50,000 & 17,535 & 5,000 & 3,034 & 5,000 & 4,000 \\
				\#test & 59,071 & 20,466 & 5,000 & 3,134 & 5,000 & 4,000 \\
				$\mathbb E[d]$ & 64.6 & 37.4 & 6.9 & 4.2 & 4.3 & 7.9 \\
				${\rm Var}[d]$ & 32,441.8 & 12,336.0 & 236.4 & 64.3 & 750.6 & 978.8 \\
				\hline
			\end{tabular}
			\vspace{0.05in}
			\caption{Statistics of all datasets. $\mathbb E[d]$ and ${\rm Var}[d]$ are mean and variance of the node degree distribution, respectively.}
			\label{table:statistics}
			\vspace{-0.2in}
		\end{table}

    	\xhdr{Baselines}
    	We compare \alg with several state-of-the-art models, including \textbf{TransE} \cite{bordes2013translating}, \textbf{ComplEx} \cite{trouillon2016complex}, \textbf{DistMult} \cite{yang2015embedding}, \textbf{RotatE} \cite{sun2019rotate}, \textbf{SimplE} \cite{kazemi2018simple}, \textbf{QuatE} \cite{zhang2019quaternion}, and \textbf{DRUM} \cite{sadeghian2019drum}. The first six models are embedding-based methods, while DRUM only uses relational paths to make prediction.
    	The implementation details of baselines (as well as our method) is provided in Appendix \ref{sec:implementation}.
    	
    	We also conduct extensive ablation study and propose two reduced versions of our model, \textbf{\textsc{Con}} and \textbf{\textsc{Path}}, which only use relational context and relational paths, respectively, to test the performance of the two components separately.
    	
    	The number of parameters of each model on DDB14 are shown in Table \ref{table:params}.
    	The result demonstrates that \alg is much more storage-efficient than embedding-based methods, since it does not need to calculate and store entity embeddings.
    	
    	\begin{table}[h]
			\centering
			\small
			\setlength{\tabcolsep}{2pt}
            \begin{tabular}{c|ccccccc}
               \hline
               Method & TransE & ComplEx & DisMult & RotatE & SimplE & QuatE  & \alg \\
               \hline
                \#param. & 3.7M & 7.4M & 3.7M & 7.4M  & 7.4M & 14.7M & 0.06M \\
                \hline
            \end{tabular}
            \vspace{0.05in}
            \caption{Number of parameters of all models on DDB14.}
            \vspace{-0.2in}
            \label{table:params}
        \end{table}

            \begin{table*}[t]
    			\centering
    			\small
    			\setlength{\tabcolsep}{2pt}
    			\begin{tabular}{c|ccc|ccc|ccc|ccc|ccc|ccc}
    				\hline
    				& \multicolumn{3}{c|}{FB15K} & \multicolumn{3}{c|}{FB15K-237} & \multicolumn{3}{c|}{WN18} & \multicolumn{3}{c|}{WN18RR} & \multicolumn{3}{c|}{NELL995} & \multicolumn{3}{c}{DDB14} \\
            		& \multicolumn{1}{c}{MRR} & \multicolumn{1}{c}{Hit@1} & \multicolumn{1}{c|}{Hit@3} & \multicolumn{1}{c}{MRR} & \multicolumn{1}{c}{Hit@1} & \multicolumn{1}{c|}{Hit@3} & \multicolumn{1}{c}{MRR} & \multicolumn{1}{c}{Hit@1} & \multicolumn{1}{c|}{Hit@3} & \multicolumn{1}{c}{MRR} & \multicolumn{1}{c}{Hit@1} & \multicolumn{1}{c|}{Hit@3} & \multicolumn{1}{c}{MRR} & \multicolumn{1}{c}{Hit@1} & \multicolumn{1}{c|}{Hit@3} & \multicolumn{1}{c}{MRR} & \multicolumn{1}{c}{Hit@1} & \multicolumn{1}{c}{Hit@3} \\
            		\hline
            		TransE & 0.962 & 0.940 & 0.982 & 0.966 & 0.946 & 0.984 & 0.971 & 0.955 & 0.984 & 0.784 & 0.669 & 0.870 & \underline{0.841} & \underline{0.781} & \underline{0.889} & \underline{0.966} & \underline{0.948} & 0.980 \\
            		ComplEx & 0.901 & 0.844 & 0.952 & 0.924 & 0.879 & 0.970 & \underline{0.985} & \underline{0.979} & \underline{0.991} & 0.840 & 0.777 & 0.880 & 0.703 & 0.625 & 0.765 & 0.953 & 0.931 & 0.968 \\
            		DistMult & 0.661 & 0.439 & 0.868 & 0.875 & 0.806 & 0.936 & 0.786 & 0.584 & 0.987 & 0.847 & \underline{0.787} & 0.891 & 0.634 & 0.524 & 0.720 & 0.927 & 0.886 & 0.961 \\
            		RotatE & 0.979 & 0.967 & 0.986 & 0.970 & 0.951 & 0.980 & 0.984 & \underline{0.979} & 0.986 & 0.799 & 0.735 & 0.823 & 0.729 & 0.691 & 0.756 & 0.953 & 0.934 & 0.964 \\
            		SimplE & \underline{0.983}  & \underline{0.972} & \underline{0.991} & 0.971 & 0.955 & 0.987 & 0.972 & 0.964 & 0.976 & 0.730 & 0.659 & 0.755 & 0.716 & 0.671 & 0.748 & 0.924 & 0.892 & 0.948 \\
            		QuatE & \underline{0.983} & \underline{0.972} & \underline{0.991} & \underline{0.974} & \underline{0.958} & \underline{0.988} & 0.981 & 0.975 & 0.983 & 0.823 & 0.767 & 0.852 & 0.752 & 0.706 & 0.783 & 0.946 & 0.922 & 0.962 \\
            		DRUM & 0.945 & 0.945 & 0.978 & 0.959 & 0.905 & 0.958 & 0.969 & 0.956 & 0.980 & \underline{0.854} & 0.778 & \underline{0.912} & 0.715 & 0.640 & 0.740 & 0.958 & 0.930 & \underline{0.987} \\
            		\hline
            		\tabincell{c}{\textsc{Con}} & \tabincell{c}{0.962\\[-1.2ex] \footnotesize $\pm$ 0.000} & \tabincell{c}{0.934\\[-1.2ex] \footnotesize $\pm$ 0.000} & \tabincell{c}{0.988\\[-1.2ex] \footnotesize $\pm$ 0.000} & \tabincell{c}{0.978\\[-1.2ex] \footnotesize $\pm$ 0.000} & \tabincell{c}{0.961\\[-1.2ex] \footnotesize $\pm$ 0.001} & \tabincell{c}{\textbf{0.995}\\[-1.2ex] \footnotesize $\pm$ 0.000} & \tabincell{c}{0.960\\[-1.2ex] \footnotesize $\pm$ 0.002} & \tabincell{c}{0.927\\[-1.2ex] \footnotesize $\pm$ 0.005} & \tabincell{c}{0.992\\[-1.2ex] \footnotesize $\pm$ 0.001} & \tabincell{c}{0.943\\[-1.2ex] \footnotesize $\pm$ 0.002} & \tabincell{c}{0.894\\[-1.2ex] \footnotesize $\pm$ 0.004} & \tabincell{c}{0.993\\[-1.2ex] \footnotesize $\pm$ 0.003} & \tabincell{c}{0.875\\[-1.2ex] \footnotesize $\pm$ 0.003} & \tabincell{c}{0.815\\[-1.2ex] \footnotesize $\pm$ 0.004} & \tabincell{c}{0.928\\[-1.2ex] \footnotesize $\pm$ 0.003} & \tabincell{c}{0.977\\[-1.2ex] \footnotesize $\pm$ 0.000} & \tabincell{c}{0.961\\[-1.2ex] \footnotesize $\pm$ 0.001} & \tabincell{c}{0.994\\[-1.2ex] \footnotesize $\pm$ 0.001} \\
            		\tabincell{c}{\textsc{Path}} & \tabincell{c}{0.937\\[-1.2ex] \footnotesize $\pm$ 0.001} & \tabincell{c}{0.918\\[-1.2ex] \footnotesize $\pm$ 0.001} & \tabincell{c}{0.951\\[-1.2ex] \footnotesize $\pm$ 0.001} & \tabincell{c}{0.972\\[-1.2ex] \footnotesize $\pm$ 0.001} & \tabincell{c}{0.957\\[-1.2ex] \footnotesize $\pm$ 0.001} & \tabincell{c}{0.986\\[-1.2ex] \footnotesize $\pm$ 0.001} & \tabincell{c}{0.981\\[-1.2ex] \footnotesize $\pm$ 0.000} & \tabincell{c}{0.971\\[-1.2ex] \footnotesize $\pm$ 0.005} & \tabincell{c}{0.989\\[-1.2ex] \footnotesize $\pm$ 0.001} & \tabincell{c}{0.933\\[-1.2ex] \footnotesize $\pm$ 0.000} & \tabincell{c}{0.897\\[-1.2ex] \footnotesize $\pm$ 0.001} & \tabincell{c}{0.961\\[-1.2ex] \footnotesize $\pm$ 0.001} & \tabincell{c}{0.737\\[-1.2ex] \footnotesize $\pm$ 0.001} & \tabincell{c}{0.685\\[-1.2ex] \footnotesize $\pm$ 0.002} & \tabincell{c}{0.764\\[-1.2ex] \footnotesize $\pm$ 0.002} & \tabincell{c}{0.969\\[-1.2ex] \footnotesize $\pm$ 0.000} & \tabincell{c}{0.948\\[-1.2ex] \footnotesize $\pm$ 0.001} & \tabincell{c}{0.991\\[-1.2ex] \footnotesize $\pm$ 0.000}  \\
            		\alg & \tabincell{c}{\textbf{0.984}\\[-1.2ex] \footnotesize $\pm$ 0.001} & \tabincell{c}{\textbf{0.974}\\[-1.2ex] \footnotesize $\pm$ 0.002} & \tabincell{c}{\textbf{0.995}\\[-1.2ex] \footnotesize $\pm$ 0.001} & \tabincell{c}{\textbf{0.979}\\[-1.2ex] \footnotesize $\pm$ 0.000} & \tabincell{c}{\textbf{0.964}\\[-1.2ex] \footnotesize $\pm$ 0.001} & \tabincell{c}{0.994\\[-1.2ex] \footnotesize $\pm$ 0.001} & \tabincell{c}{\textbf{0.993}\\[-1.2ex] \footnotesize $\pm$ 0.001} & \tabincell{c}{\textbf{0.988}\\[-1.2ex] \footnotesize $\pm$ 0.001} & \tabincell{c}{\textbf{0.998}\\[-1.2ex] \footnotesize $\pm$ 0.000} & \tabincell{c}{\textbf{0.974}\\[-1.2ex] \footnotesize $\pm$ 0.001} & \tabincell{c}{\textbf{0.954}\\[-1.2ex] \footnotesize $\pm$ 0.002} & \tabincell{c}{\textbf{0.994}\\[-1.2ex] \footnotesize $\pm$ 0.000} & \tabincell{c}{\textbf{0.896}\\[-1.2ex] \footnotesize $\pm$ 0.001} & \tabincell{c}{\textbf{0.844}\\[-1.2ex] \footnotesize $\pm$ 0.004} & \tabincell{c}{\textbf{0.941}\\[-1.2ex] \footnotesize $\pm$ 0.004} & \tabincell{c}{\textbf{0.980}\\[-1.2ex] \footnotesize $\pm$ 0.000} & \tabincell{c}{\textbf{0.966}\\[-1.2ex] \footnotesize $\pm$ 0.001} & \tabincell{c}{\textbf{0.995}\\[-1.2ex] \footnotesize $\pm$ 0.000} \\
            		\hline
				\end{tabular}
				\vspace{0.05in}
				\caption{Results of relation prediction on all datasets. Best results are highlighted in bold, and best results of baselines are highlighted with underlines.}
				\label{table:result_1}
				\vspace{-0.1in}
			\end{table*}
    	
    	\xhdr{Evaluation Protocol}
    	We evaluate all methods on relation prediction, i.e., for a given entity pair $(h, t)$ in the test set, we rank the ground-truth relation type $r$ against all other candidate relation types.
    	%Following the standard procedure in prior work, candidate set of relation types or entities is \textit{filtered}, i.e., the candidate relation types for $(h, t)$ do not include any $r'$ where $(h, r', t)$ appears in the training, validation, or test set.
    	It is worth noticing that most baselines are originally designed for head/tail prediction, therefore, their negative sampling strategy is to corrupt the head or the tail for a true triple $(h, r, t)$, i.e., replacing $h$ or $t$ with a randomly sampled entity $h'$ or $t'$ from KGs, and using $(h', r, t)$ or $(h, r, t')$ as the negative sample.
		In relation prediction, since the task is to predict the missing relation for a given pair $(h, t)$, we modify the negative sampling strategy accordingly by corrupting the relation $r$ of each true triplet $(h, r, t)$, and use $(h, r', t)$ as the negative sample where $r'$ is randomly sampled from the set of relation types.
		%Note that if $(h, r', t)$ happens to be a true triple, we remove it from negative samples.
		This new negative sampling strategy can indeed improve the performance of baselines in relation prediction.
		%For example, the Hit@1 of TransE, ComplEx, DisMult, RotatE, SimplE, and QuatE on WN18 increases from 0.931, 0.957, 0.578, 0.975, 0.951, 0.971 to 0.955, 0.979, 0.584, 0.979, 0.964, 0.975, respectively.
    	
    	We use \textbf{MRR} (mean reciprocal rank) and \textbf{Hit@1, 3} (hit ratio with cut-off values of 1 and 3) as evaluation metrics.
    	%Note that a lower value of MR represents better performance, while higher values are preferable for other metrics.

	\subsection{Main Results}
		\textbf{Comparison with baselines}.
		The results of relation prediction on all datasets are reported in Table \ref{table:result_1}.
		In general, our method outperforms all baselines on all datasets.
		Specifically, the absolute Hit@1 gain of \alg against the best baseline in relation prediction task are $0.2\%$, $0.6\%$, $0.9\%$, $16.7\%$, $6.3\%$, and $1.8\%$ in the six datasets, respectively.
		The improvement is rather significant for WN18RR and NELL995, which are exactly the two most sparse KGs according to the average node degree shown in Table \ref{table:statistics}.
		This empirically demonstrates that \alg maintains great performance for sparse KGs, and this is probably because \alg has much fewer parameters than baselines and is less prone to overfitting.
		In contrast, performance gain of \alg on FB15K is less significant, which may be because the density of FB15K is very high so that it is much easier for baselines to handle.
		%Another potential reason is that, since FB15K is quite large, we set the number of both context message passing layers and path message passing layers as two considering the running time, but the results could be improved if we fine tune the hyper-parameters further according to our experience in other datasets.

		In addition, the results also demonstrate the stability of \alg as we observe that most of the standard deviations are quite small.
		
		Results in Tables \ref{table:result_1} also show that, in many cases \textsc{Con} or \textsc{Path} can already beat most baselines.
		Combining relational context and relational paths together usually leads to even better performance.

		\xhdr{Inductive KG completion}
			We also examine the performance of our method in inductive KG completion.
			We randomly sample a subset of nodes that appears in the test set, then remove these nodes along with their associated edges from the training set.
			The remaining training set is used to train the models, and we add back the removed edges during evaluation.
			The evaluation transforms from fully conductive to fully inductive when the ratio of removed nodes increases from 0 to 1.
			The results of \alg, DistMult, and RotatE on relation prediction task are plotted in Figure \ref{fig:inductive}.
			We observe that the performance of our method decreases slightly in fully inductive setting (from 0.954 to 0.922), while DistMult and RotatE fall to a ``randomly guessing'' level.
			This is because the two baselines are embedding-based models that rely on modeling node identity, while our method does not consider node identity thus being naturally generalizable to inductive KG completion.

			\begin{figure*}
  				\begin{minipage}[t]{0.3\linewidth}
  				    \centering 
    				\includegraphics[width=\textwidth]{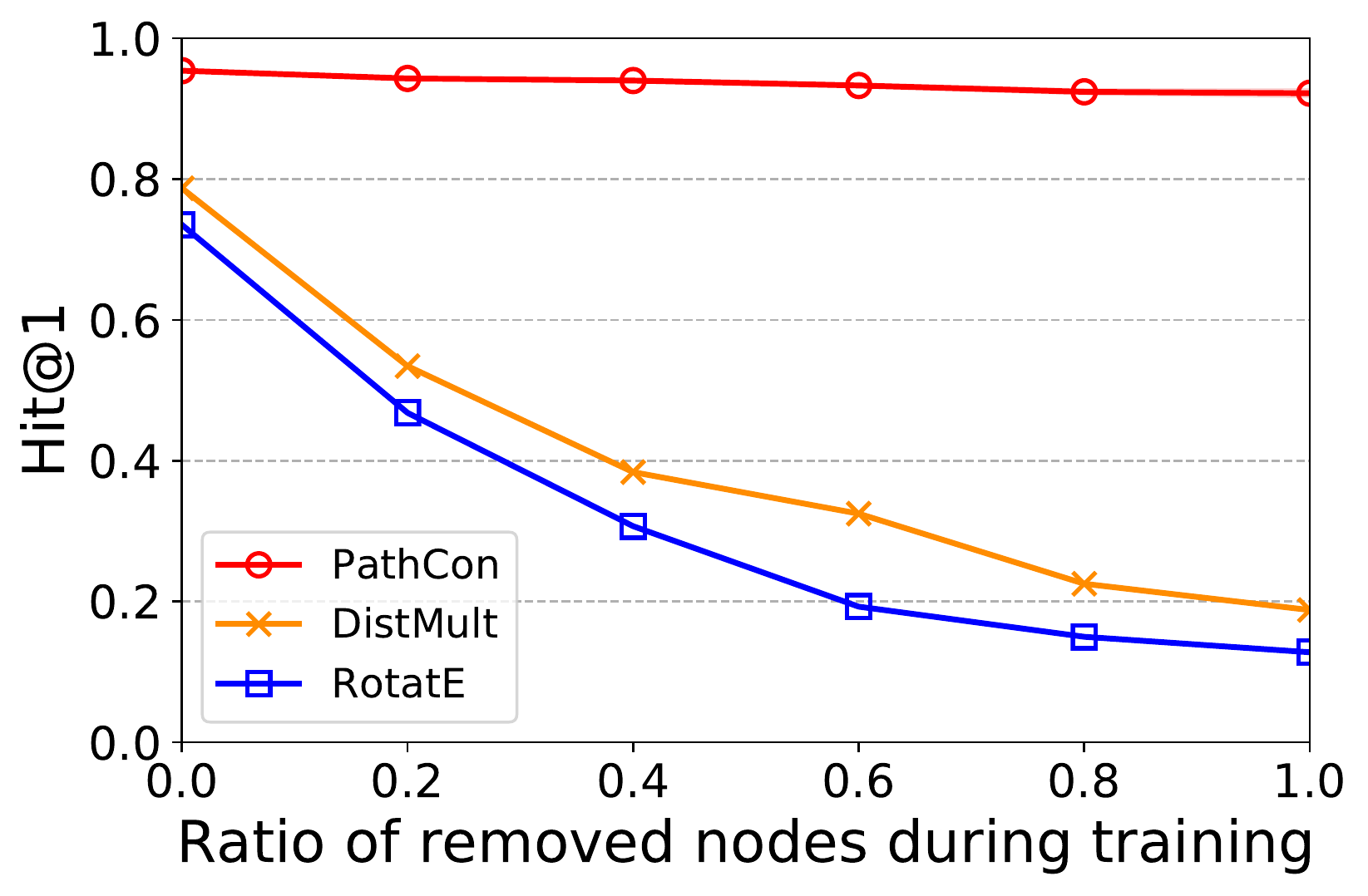}
    				\vspace{-0.2in}
    				\caption{Results of inductive KG completion on WN18RR.} 
    				\label{fig:inductive} 
  				\end{minipage}
  				\hfill
  				\begin{minipage}[t]{0.3\linewidth} 
    				\centering 
    				\includegraphics[width=\textwidth]{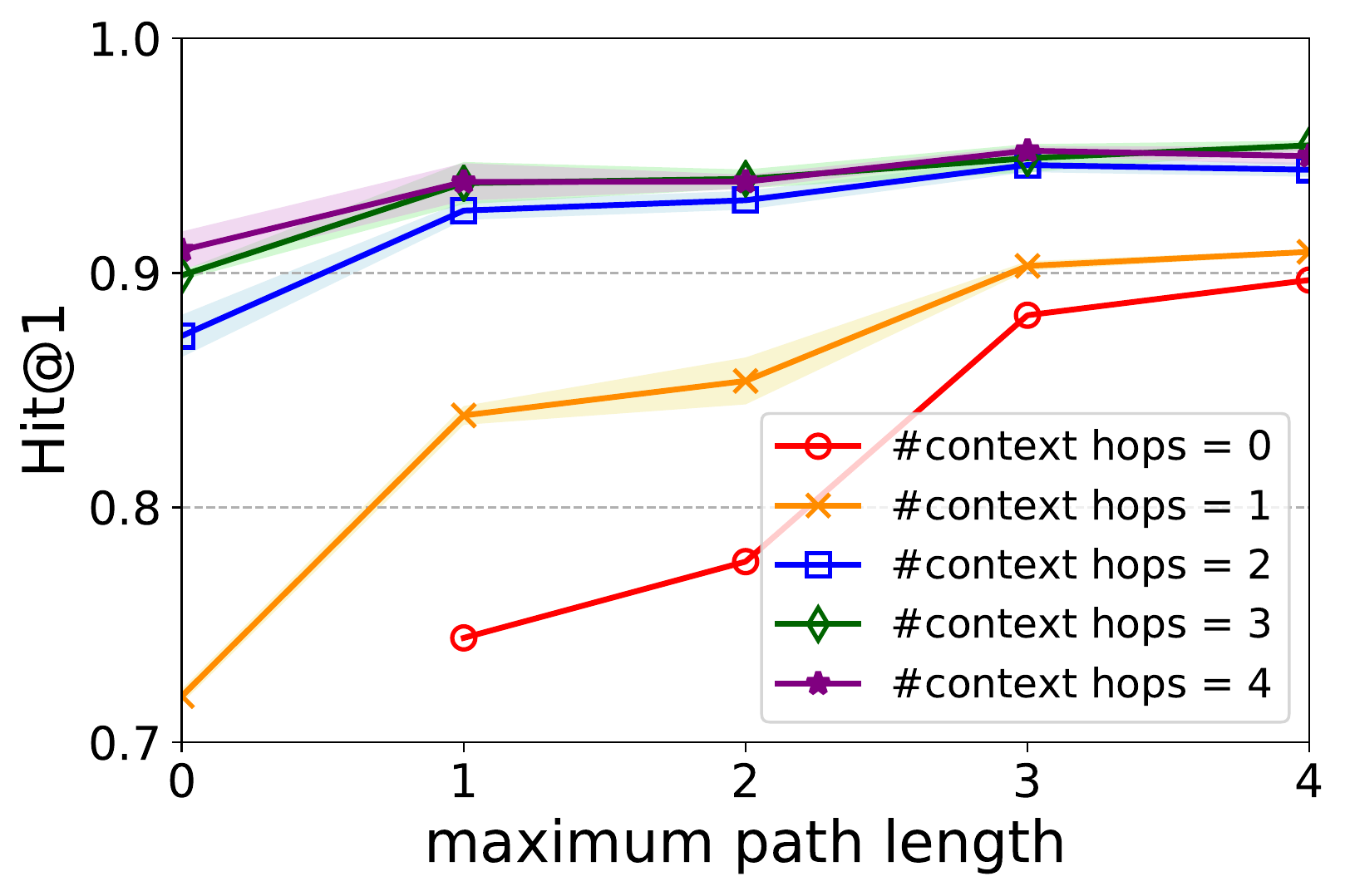}
    				\vspace{-0.2in}
    				\caption{Results of \alg with different hops/length on WN18RR.} 
    				\label{fig:layers} 
  				\end{minipage}
  				\hfill
  				\begin{minipage}[t]{0.3\linewidth} 
    				\centering 
    				\includegraphics[width=\textwidth]{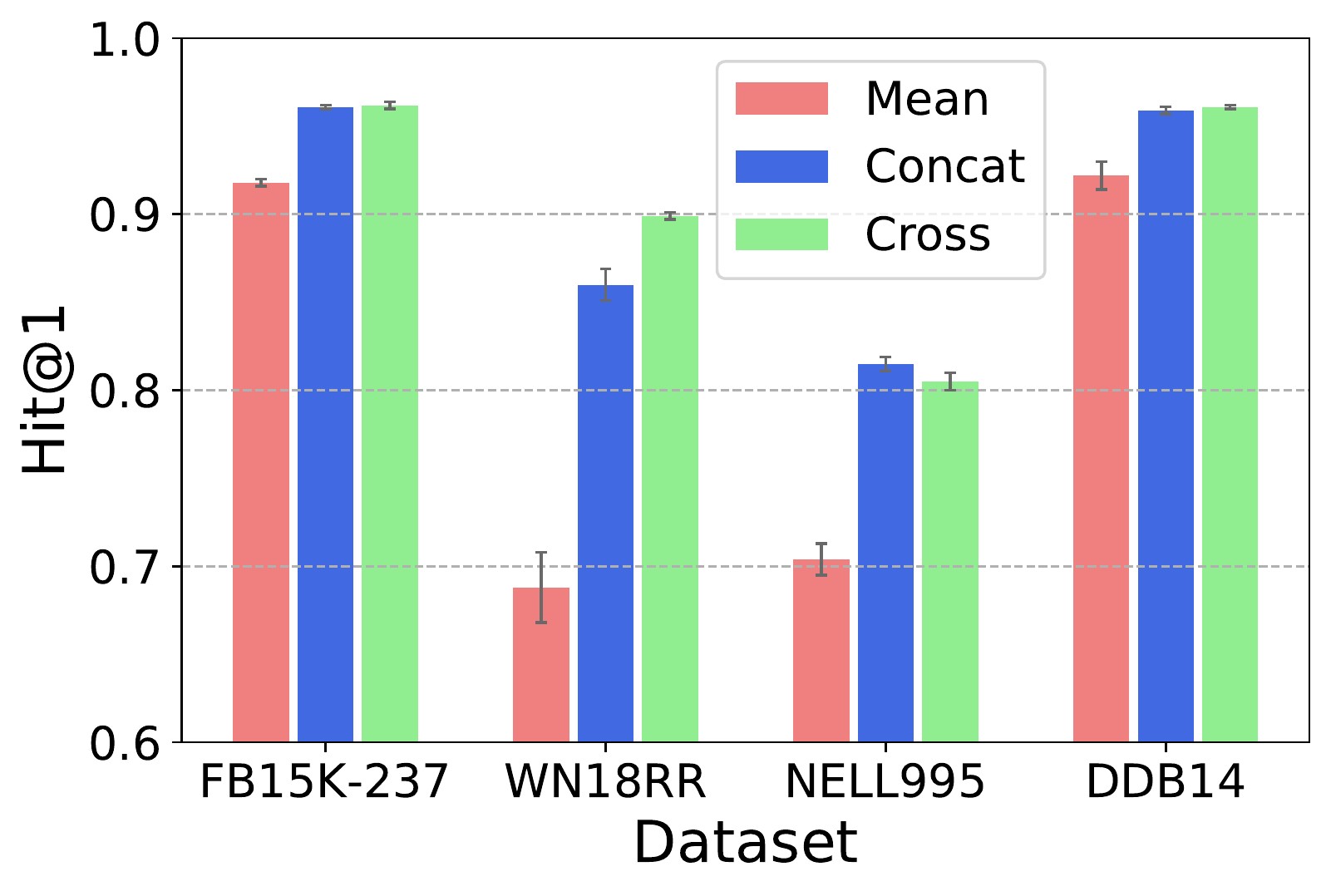}
    				\vspace{-0.2in}
    				\caption{Results of \textsc{Con} with different context aggregators.} 
    				\label{fig:neighbor_agg} 
  				\end{minipage} 
			\end{figure*}

		\begin{figure}
  			\begin{minipage}[t]{0.47\linewidth} 
    			\centering 
    			\includegraphics[width=\textwidth]{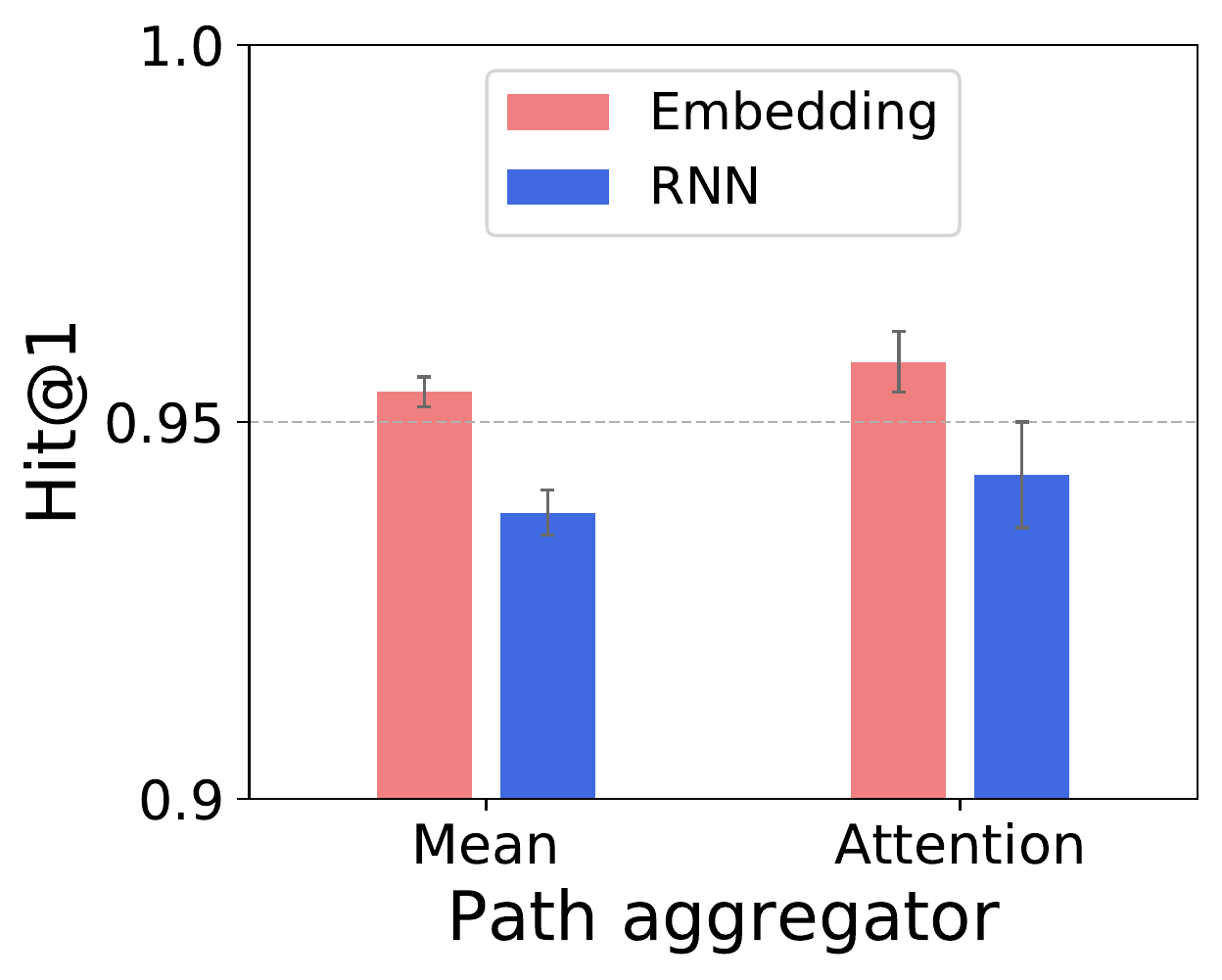}
    			\vspace{-0.2in}
    			\caption{Results of \alg with different path representation types and path aggregators on WN18RR.} 
    			\label{fig:path_agg_updater} 
  			\end{minipage}
  			\hfill
  			\begin{minipage}[t]{0.46\linewidth} 
    			\centering 
    			\includegraphics[width=\textwidth]{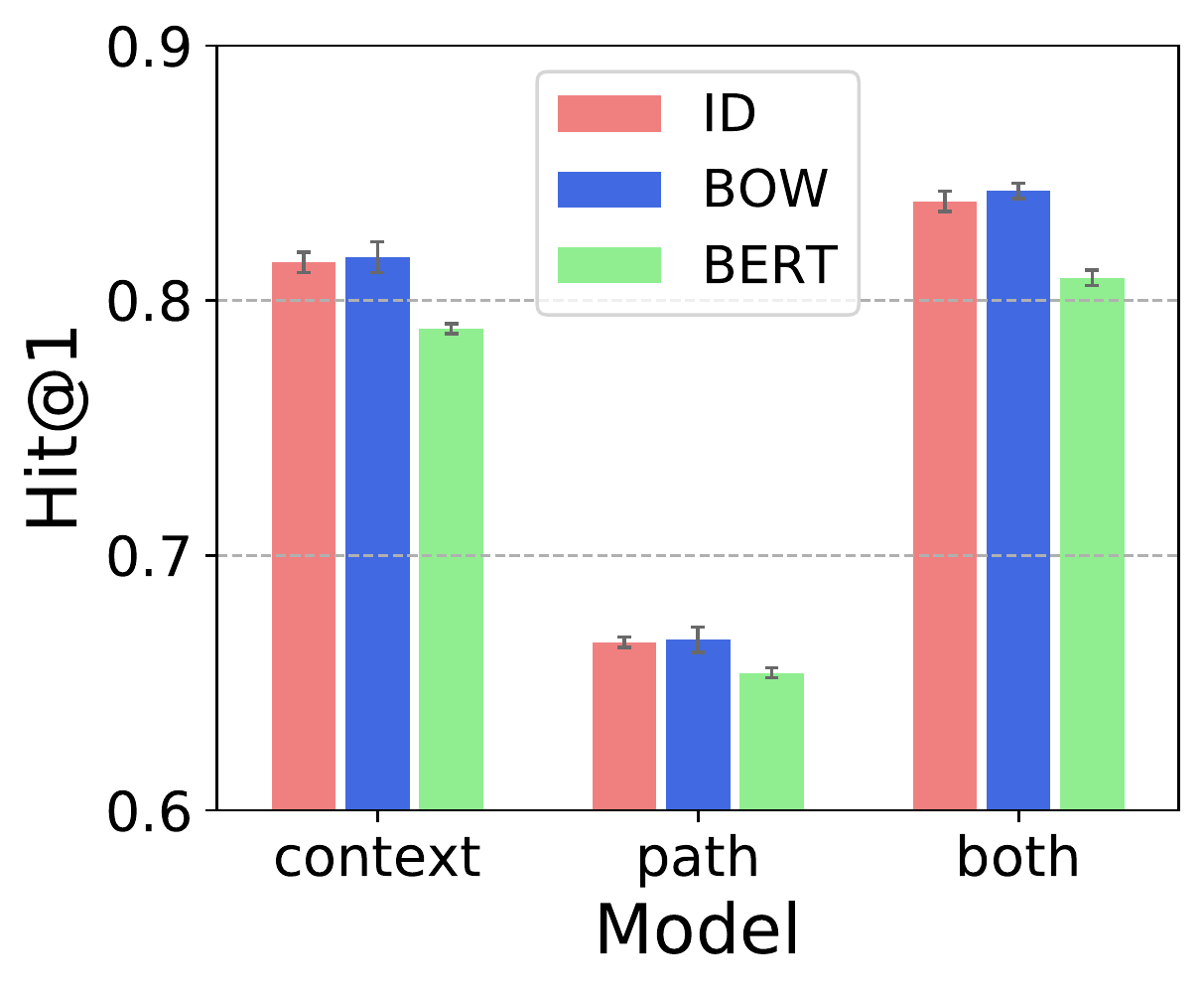}
    			\vspace{-0.2in}
    			\caption{Results of \textsc{Con}, \textsc{Path}, and \alg with different initial features of relations on NELL995.} 
    			\label{fig:edge_feature} 
  			\end{minipage} 
		\end{figure}

	\subsection{Model Variants}
	\label{sec:model_variants}
		\textbf{The number of context hops and maximum path length}.
		We investigate the sensitivity of our model to the number of context hops and maximum path length.
		We vary the two numbers from 0 to 4 (0 means the corresponding module is not used), and report the results of all combinations (without (0, 0)) on WN18RR in Figure \ref{fig:layers}.
		It is clear to see that increasing the number of context hops and maximum path length can significantly improve the result when they are small, which demonstrates that including more neighbor edges or counting longer paths does benefit the performance.
		But the marginal benefit is diminishing with the increase of layer numbers.
		Similar trend is observed on other datasets too.

		\xhdr{Context aggregators}
		We study how different implementations of context aggregator affect the model performance.
		The results of Mean, Concat, and Cross context aggregator on four datasets are shown in Figure \ref{fig:neighbor_agg} (results on FB15K and WN18 are omitted as they are similar to FB15K-237 and WN18RR, respectively).
		The results show that Mean performs worst on all datasets, which indicates the importance of node orders when aggregating features from nodes to edges.
		It is also interesting to notice that the performance comparison between Concat and Cross varies on different datasets:
		Concat is better than Cross on NELL995 and is worse than Cross on WN18RR, while their performance is on par on FB15K-237 and DDB14.
		However, note that a significant defect of Cross is that it has much more parameters than Concat, which requires more running time and memory resource.

		\xhdr{Path representation types and path aggregators}
		We implement four combinations of path representation types and path aggregators: Embedding+Mean, Embedding+Attention, RNN+Mean, and RNN+Attention, of which the results are presented in Figure \ref{fig:path_agg_updater}.
		Different from context aggregators, results on the six datasets are similar for path representation types and path aggregators, so we only report the results on WN18RR.
		We find that Embedding is consistently better than RNN, which is probably because the length of relational paths are generally short (no more than 4 in our experiments), so RNN can hardly demonstrate its strength in modeling sequences.
		The results also show that Attention aggregator performs slightly better than Mean aggregator.
		This demonstrates that the contextual information of head and tail entities indeed helps identify the importance of relational paths.

		\xhdr{Initial edge features}
		Here we examine three types of initial edge features: identity, BOW, and BERT embedding of relation types.
		We choose to test on NELL995 because its relation names consist of relatively more English words thus are semantically meaningful (e.g., ``organization.headquartered.in.state.or.province'').
		The results are reported in Figure \ref{fig:edge_feature}, which shows that BOW features are slightly better than identity, but BERT embeddings perform significantly worse than the other two.
		We attribute this finding to that
		%, (1) the dimension of pre-trained BERT embeddings may be too high (768) for our task, and (2)
		BERT embeddings are better at identifying semantic relationship among relation types, but our model aims to learn the mapping from BERT embeddings of context/paths to the identity of predicted relation types.
		In other words, BERT may perform better if the predicted relation types are also represented by BERT embeddings, so that this mapping is learned within the embedding space.
		We leave the exploration as future work.

	\subsection{Case Study on Model Explainabilty}
		We choose FB15K-237 and DDB14 as the datasets to show the explainability of \alg.
		The number of context hops is set to 1 and the maximum path length is set to 2.
		When training is completed, we choose three relations from each dataset and list the most important relational context/paths to them based on the transformation matrix of the context/path aggregator.
		The results are presented in Table \ref{table:rule}, from which we find that most of the identified context/paths are logically meaningful.
		For example, ``education campus of'' can be inferred by ``education institution in'', and ``is associated with'' is found to be a transitive relation.
		In addition, more visualized results and discussion on DDB14 dataset are included in Appendix \ref{sec:ddb14_explain}.

		\begin{table*}[t]
			\centering
			\small
			\setlength{\tabcolsep}{3pt}
			\begin{tabular}{c|c|c|c}
				\hline
				& Predicted relation & Important relational context & Important relational paths \\
				\hline
				\multirow{3}{*}{FB15K-237} & award winner & award honored for, award nominee & (award nominated for), (award winner, award category) \\
				%\cline{1-5}
				& film written by & film release region & (film edited by), (film crewmember) \\
				%\cline{2-5}
				& education campus of & education major field of study & (education institution in) \\
				%& education institution colors & & (sports team colors) \\
				\hline
				\multirow{3}{*}{DDB14} & may cause & may cause, belongs to the drug family of & (is a risk factor for), (see also, may cause) \\
				%\cline{2-5}
				& is associated with & is associated with, is a risk factor for & (is associated with, is associated with) \\
				%\cline{2-5}
				& may be allelic with & may be allelic with, belong(s) to the category of & (may cause, may cause), (may be allelic with, may be allelic with) \\
				\hline
			\end{tabular}
			\vspace{0.05in}
			\caption{Examples of important context/paths identified by \alg on FB15K-237 and DDB14.}
			\label{table:rule}
			\vspace{-0.2in}
		\end{table*}

\section{Related Work}
	%We discuss two lines of related work: knowledge graph completion and graph neural networks.

	\subsection{Knowledge Graph Completion}
		KGs provide external information for a variety of downstream tasks such as recommender systems \cite{wang2018dkn, wang2018ripplenet, wang2019exploring} and semantic analysis \cite{wang2018shine}.
		Most existing methods of KG completion are based on embeddings, which normally assign an embedding vector to each entity and relation in the continuous embedding space and train the embeddings based on the observed facts.
		One line of KG embedding methods is \textit{translation-based}, which treat entities as points in a continuous space and each relation translates the entity point.
		The objective is that the translated head entity should be close to the tail entity in real space \cite{bordes2013translating}, complex space \cite{sun2019rotate}, or quaternion space \cite{zhang2019quaternion}, which have shown capability to handle multiple relation patterns and achieve state-of-the-art result.
		%To deal with the 1-to-N/N-to-1 relations, several methods introduce relation-specific planes \cite{wang2014knowledge} or subspaces \cite{lin2015learning}.
		Another line of work is \textit{multi-linear} or \textit{bilinear models}, where they calculate the semantic similarity by matrix or vector dot product in real \cite{yang2015embedding} or complex space \cite{trouillon2016complex}.
		Besides, several embedding-based methods explore the architecture design that goes beyond point vectors \cite{socher2013reasoning, dettmers2018convolutional}.
		However, these embedding-based models fail to predict links in inductive setting, neither can they discover any rules that explain the prediction.
		
		%Some prior work also considers modeling paths in KGs.
		%For example, Neural LP \cite{yang2017differentiable}, DRUM \cite{sadeghian2019drum}, and IterE \cite{zhang2019iteratively} try to learn logical rules by modeling the paths that connect the head entity and the tail entity.
		%However, they fail to consider the neighbor structure of the predicted relations, thus is not expressive enough for the setting where paths are sparse.
		
		%There are also work considering context of entities explicitly.
		%For example, A2N \cite{bansal2019a2n} and COKE \cite{wang2019coke} propose to leverage the contextual information for link prediction by attending to the neighbor entities, but in our work we consider neighbor relations as context.

	\subsection{Graph Neural Networks}
		Existing GNNs generally follow the idea of neural message passing \cite{gilmer2017neural} that consists of two procedures: propagation and aggregation.
		%i.e., each node on the graph propagates its feature to its neighbors and then aggregates the neighborhood features to perform one update.
		%The two procedures are operated iteratively so as to gather messages from multi-hop neighbors.
		Under this framework, several GNNs are proposed that take inspiration from convolutional neural networks \cite{duvenaud2015convolutional, hamilton2017inductive, kipf2017semi, wang2020unifying}, recurrent neural networks \cite{li2015gated}, and recursive neural networks \cite{bianchini2001processing}. 
		% bruna2013spectral, defferrard2016convolutional, 
% 		Xu \textit{et al.} \cite{gin} has proved that GNNs are as powerful as the Weisfeiler-Lehman test with the appropriate aggregation.
        However, these methods use node-based message passing, while we propose passing messages based on edges in this work. 
		
		There are two GNN models conceptually connected to our idea of identifying relative position of nodes in a graph.
		DEGNN \cite{li2020distance} captures the distance between the node set whose representation is to be learned and each node in the graph, which is used as extra node attributes or as controllers of message aggregation in GNNs.
		%PGNN \cite{you2019position} distinguishes two nodes with similar local structures by calculating the relative distance between the nodes and a set of pre-defined anchor nodes.
		SEAL \cite{zhang2018link} labels nodes with their distance to two nodes $a$ and $b$ when predicting link existence between $(a, b)$.
		In contrast, we use relational paths to indicate the relative position of two nodes.
		
		Researchers also tried to apply GNNs to knowledge graphs.
		For example, Schlichtkrull \textit{et al.} \cite{schlichtkrull2018modeling} use GNNs to model the entities and relations in KGs, however, they are limited in that they did not consider the relational paths and cannot predict in inductive settings. 
		Wang \textit{et al.} \cite{wang2019knowledge_a, wang2019knowledge_b} use GNNs to learn entity embeddings in KGs, but their purpose is to use the learned embeddings to enhance the performance of recommender systems rather than KG completion.

% 	    Our proposed way to model relational context is conceptually connected to Graph Networks \cite{battaglia2018relational} that perform edge update and node update iteratively, but with different motivation: Graph Networks are general frameworks that jointly model nodes and edges, while we use the alternate message passing strategy to optimize the time complexity. \hr{seems that we no longer need this paragraph}

\section{Conclusion and Future Work}
	We propose \alg for KG completion.
	\alg considers two types of subgraph structure in KGs, i.e., contextual relations of the head/tail entity and relational paths between head and tail entity.
	We show that both relational context and relational paths are critical to relation prediction, and they can be combined further to achieve state-of-the-art performance.
	Moreover, \alg is also shown to be inductive, storage-efficient, and explainable.
	
	We point out four directions for future work.
	First, as we discussed in Remark \ref{remark:2}, it is worth studying the empirical performance of \alg on node-feature-aware KGs.
	Second, as we discussed in Section \ref{sec:model_variants}, designing a model that can better take advantage of pre-trained word embeddings is a promising direction;
	Third, it is worth investigating why RNN does not perform well, and whether we can model relational paths better;
	Last, it is interesting to study if the context representation and path representation can be assembled in a more principled way.
	
\xhdr{Acknowledgements}
This research has been supported in part by DARPA, ARO, NSF, NIH, Stanford Data Science Initiative, Wu Tsai Neurosciences Institute, Chan Zuckerberg Biohub, Amazon, JPMorgan Chase, Docomo, Hitachi, Intel, JD.com, KDDI, NVIDIA, Dell, Toshiba, Visa, and UnitedHealth Group.

\bibliographystyle{ACM-Reference-Format}
\bibliography{reference}

\renewcommand\thesubsection{\Alph{subsection}}

\clearpage

\section*{Appendix}
	\subsection{Proof of Theorem \ref{thm:1}}
	\label{sec:proof_1}
		\begin{proof}
			In each iteration of node-based message passing:
			
			The aggregation (Eq. (\ref{eq:node_based_mp_1})) is performed for $N$ times, and each aggregation takes $\mathbb E[d] = \frac{2M}{N}$ elements as input in expectation, where $\mathbb E[d]$ is the expected node degree.
			Therefore, the expected cost of aggregation in each iteration is $N \cdot \mathbb E[d] = 2M$;
			
			The update (Eq. (\ref{eq:node_based_mp_2})) is performed for $N$ times, and each update takes 2 elements as input.
			Therefore, the cost of update in each iteration is $2N$.
			
			In conclusion, the expected cost of node-based message passing in each iteration is $2M + 2N$.
		\end{proof}
		
		 \begin{figure*}[t]
    		\centering 
    		\includegraphics[width=0.9\textwidth]{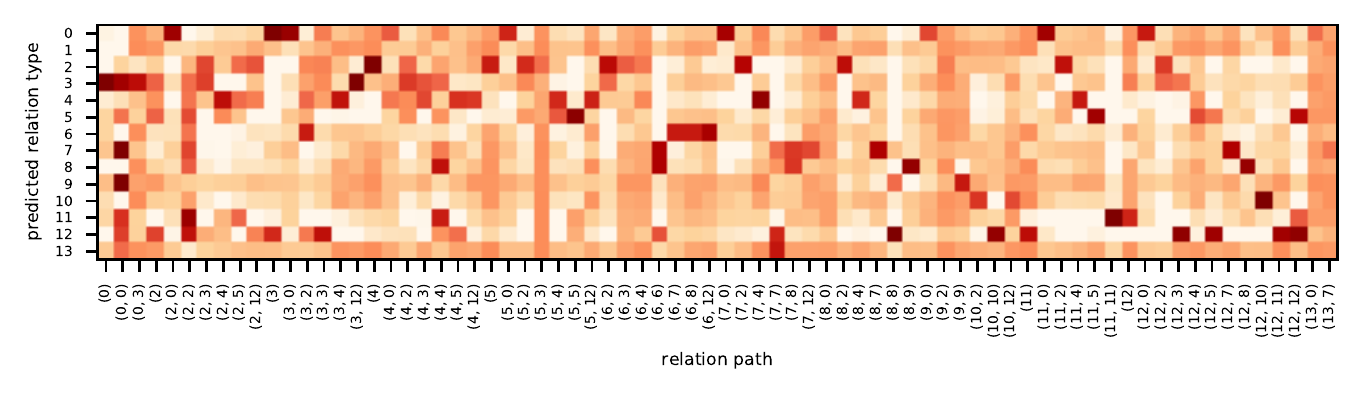}
    		\vspace{-0.2in}
    		\caption{The learned correlation between all relational paths with length $\leq 2$ and the predicted relations on DDB14.}
    		\label{fig:heatmap_path} 
		\end{figure*}

	\subsection{Proof of Theorem \ref{thm:2}}
	\label{sec:proof_2}
			For relational message passing, it actually passes messages on the \textit{line graph} of the original graph.
			The line graph of a given graph $\mathcal G$, denoted by $L(\mathcal G)$, is a graph such that each node of $L(\mathcal G)$ represents an edge of $\mathcal G$, and two nodes of $L(\mathcal G)$ are adjacent if and only if their corresponding edges share a common endpoint in $\mathcal G$.
			We show by the following lemma that the line graph is much \textit{larger} and \textit{denser} than the original graph:
			
			\begin{lemma}
				\label{lemma:1}
				%Let $d$ be a random variable of node degrees.
				The number of nodes in line graph $L(\mathcal G)$ is $M$, and the expected node degree of $L(\mathcal G)$ is
				\begin{equation}
					\mathbb E_{L(\mathcal G)}[d] = \frac{N \cdot {\rm Var}_{\mathcal G}[d]}{M} + \frac{4M}{N} - 2,
				\end{equation}
				where ${\rm Var}_{\mathcal G}[d]$ is the variance of node degrees in $\mathcal G$.
			\end{lemma}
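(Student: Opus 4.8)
The plan is to reduce the entire claim to a single quantity, the sum of squared node degrees of $\mathcal G$. The node count is immediate from the definition: the nodes of $L(\mathcal G)$ are in bijection with the edges of $\mathcal G$, so $L(\mathcal G)$ has exactly $M$ nodes. The work is all in the degree formula.

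First I would compute the degree of a single node of $L(\mathcal G)$. Fix an edge $e = (u,v)$ of $\mathcal G$; the node representing $e$ is adjacent in $L(\mathcal G)$ to precisely those other edges that share an endpoint with $e$. There are $d_u - 1$ such edges through $u$ and $d_v - 1$ through $v$ (excluding $e$ itself), where $d_u$ denotes the degree of $u$ in $\mathcal G$, so the degree of $e$ in the line graph is $d_u + d_v - 2$. Averaging over all $M$ nodes of $L(\mathcal G)$ then gives
\begin{equation*}
    \mathbb E_{L(\mathcal G)}[d] = \frac{1}{M} \sum_{e = (u,v) \in \mathcal E} \big( d_u + d_v - 2 \big) = \frac{1}{M} \sum_{e = (u,v) \in \mathcal E} \big( d_u + d_v \big) - 2.
\end{equation*}

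The key step is a double-counting identity for the remaining sum: in $\sum_{e=(u,v)} (d_u + d_v)$, each node $w$ is counted once for every edge incident to it and contributes $d_w$ each time, so its total contribution is $d_w^2$. Hence $\sum_{e=(u,v)} (d_u + d_v) = \sum_{w \in \mathcal V} d_w^2$, and the expected line-graph degree becomes $\frac{1}{M}\sum_{w} d_w^2 - 2$.

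Finally I would convert the sum of squared degrees into the stated form. By the handshake lemma $\sum_w d_w = 2M$, so $\mathbb E_{\mathcal G}[d] = 2M/N$, and the definition ${\rm Var}_{\mathcal G}[d] = \frac{1}{N}\sum_w d_w^2 - (2M/N)^2$ rearranges to $\sum_w d_w^2 = N \cdot {\rm Var}_{\mathcal G}[d] + 4M^2/N$. Substituting this in yields $\frac{N \cdot {\rm Var}_{\mathcal G}[d]}{M} + \frac{4M}{N} - 2$, as claimed. There is no serious obstacle in this argument; the only point demanding care is the double-counting identity, and one must record the standing assumption that $\mathcal G$ is simple (no self-loops or multi-edges) so that the per-edge formula $d_u + d_v - 2$ holds without correction terms.
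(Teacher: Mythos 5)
Your proof is correct and follows essentially the same route as the paper's: both arguments reduce the expected line-graph degree to the sum of squared degrees $\sum_w d_w^2$ and then finish with the identical handshake-plus-variance algebra to obtain $\frac{N \cdot {\rm Var}_{\mathcal G}[d]}{M} + \frac{4M}{N} - 2$. The only difference is bookkeeping --- you sum the per-node line-graph degrees $d_u + d_v - 2$ directly over the edges of $\mathcal G$, while the paper counts the edges of $L(\mathcal G)$ as $\sum_i \binom{d_i}{2}$ and doubles; these are the same double count organized by edges versus by vertices, and your explicit remark that $\mathcal G$ must be simple (no self-loops or parallel edges) makes precise an assumption the paper leaves implicit.
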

	
			\begin{proof}
				It is clear that the number of nodes in line graph $L(\mathcal G)$ is $M$ because each node in $L(\mathcal G)$ corresponds to an edge in $\mathcal G$.
				We now prove that the expected node degree of $L(\mathcal G)$ is $\mathbb E_{L(\mathcal G)} [d] = \frac{N \cdot {\rm Var}_{\mathcal G}[d]}{M} + \frac{4M}{N} - 2$.
		
				Let's first count the number of edges in $L(\mathcal G)$.
				According to the definition of line graph, each edge in $L(\mathcal G)$ corresponds to an unordered pair of edges in $\mathcal G$ connecting to a same node;
				On the other hand, each unordered pair of edges in $\mathcal G$ that connect to a same node also determines an edge in $L(\mathcal G)$.
				Therefore, the number of edges in $L(\mathcal G)$ equals the number of all unordered pairs of edges connecting to a same node:
				\begin{equation*}
					\# \ edges \ in \ L(\mathcal G) = \sum_i \binom{d_i}{2} = \sum_i \frac{d_i (d_i - 1)}{2} = \frac{1}{2} \sum_i d_i^2 - M,
				\end{equation*}
				where $d_i$ is the degree of node $v_i$ in $\mathcal G$ and $M = 2 \sum_i d_i$ is the number of edges.
				Then the the expected node degree of $L(\mathcal G)$ is
				\begin{equation*}
				\begin{split}
					\mathbb E_{L(\mathcal G)} [d] =& 2 \cdot \frac{\# \ edges \ in \ L(\mathcal G)}{\# \ nodes \ in \ L(\mathcal G)} = \frac{\sum_i d_i^2 - 2M}{M} \\
					=& \frac{N \cdot \mathbb E_{\mathcal G}[d^2]}{M} - 2 = \frac{N \left( {\rm Var}_{\mathcal G}[d] + \mathbb E^2_{\mathcal G}[d] \right)}{M} - 2 \\
					=& \frac{N \cdot {\rm Var}_{\mathcal G}[d] + N \left( \frac{2M}{N} \right)^2}{M} - 2 \\
					=& \frac{N \cdot {\rm Var}_{\mathcal G}[d]}{M} + \frac{4M}{N} - 2.
				\end{split}
				\end{equation*}
			\end{proof}
	
			From Lemma \ref{lemma:1} it is clear to see that $\mathbb E_{L(\mathcal G)} [d]$ is at least twice of $\mathbb E_{\mathcal G} [d] = \frac{2M}{N}$, i.e. the expected node degree of the original graph $\mathcal G$, since ${\rm Var}_{\mathcal G}[d] \geq 0$ ($-2$ is omitted).
			Unfortunately, in real-world graphs (including KGs), node degrees vary significantly, and they typically follow the power law distribution whose variance is extremely large due to the long tail (this is empirically justified in Table \ref{table:statistics}, as we can see that ${\rm Var}_{\mathcal G}[d]$ is quite large for all KGs).
			This means that $\mathbb E_{L(\mathcal G)} [d] \gg \mathbb E_{\mathcal G} [d]$ in practice.
			On the other hand, the number of nodes in $L(\mathcal G)$ (which is $M$) is also far larger than the number of nodes in $\mathcal G$ (which is $N$).
			Therefore, $L(\mathcal G)$ is generally much larger and denser than its original graph $\mathcal G$.
			Based on Lemma \ref{lemma:1}, Theorem \ref{thm:2} is proven as follows:
 			\begin{proof}
 				In each iteration of relational message passing:
			
				The aggregation (Eq. (\ref{eq:relational_mp_1})) is performed for $M$ times, and each aggregation takes $\mathbb E_{L(\mathcal G)}[d] = \frac{N \cdot {\rm Var}_{\mathcal G}[d]}{M} + \frac{4M}{N} - 2$ elements as input in expectation.
				So the expected cost of aggregation in each iteration is $M \cdot \mathbb E_{L (\mathcal G)} [d] = N \cdot {\rm Var}_{\mathcal G}[d] + \frac{4M^2}{N} - 2M$;
			
				The update ((Eq. (\ref{eq:relational_mp_2}))) is performed for $M$ times, and each update takes 2 elements as input.
				Therefore, the cost of update in each iteration is $2M$.
			
				In conclusion, the expected cost of relational message passing in each iteration is $N \cdot {\rm Var}_{\mathcal G}[d] + \frac{4M^2}{N}$.
 			\end{proof}

 	\subsection{Proof of Theorem \ref{thm:3}}
	\label{sec:proof_3}
			\begin{proof}
				In each iteration of alternate relational message passing:
			
				The edge-to-node aggregation operation (Eq. (\ref{eq:alternate_mp_1})) is performed for $N$ times, and each aggregation takes $\mathbb E[d] = \frac{2M}{N}$ elements as input in expectation.
			Therefore, the expected cost of edge-to-node aggregation in each iteration is $N \cdot \mathbb E[d] = 2M$;
			
				The node-to-edge aggregation (Eq. (\ref{eq:alternate_mp_2})) is performed for $M$ times, and each aggregation takes 2 elements as input.
			So the cost of node-to-edge aggregation in each iteration is $2M$;
			
				The update (Eq. (\ref{eq:alternate_mp_3})) is performed for $M$ times, and each update takes 2 elements as input.
				Therefore, the cost of update in each iteration is $2M$.
			
				In conclusion, the expected cost of alternate relational message passing in each iteration is $6M$.
			\end{proof}

	\subsection{Implementation Details}
	\label{sec:implementation}
		\textbf{Baselines}.
		The implementation code of TransE, DistMult, ComplEx, and RotatE comes from \url{https://github.com/DeepGraphLearning/KnowledgeGraphEmbedding};
		the implementation code of SimplE is at \url{https://github.com/baharefatemi/SimplE};
		the implementation code of QuatE is at \url{https://github.com/cheungdaven/QuatE}, and we use QuatE$^2$ (QuatE without type constraints) here;
        the implementation code of DRUM is at \url{https://github.com/alisadeghian/DRUM}.
		For fair comparison, the embedding dimension for all the baselines are set to 400.
		We train each baseline for 1,000 epochs, and report the test result when the result on validation set is optimal.
		%The other hyper-parameters are set as default in their repositories.
		
		\xhdr{Our method}
		Our proposed method is implemented in TensorFlow and trained on single GPU.
    	We use Adam \cite{kingma2015adam} as the optimizer with learning rate of 0.005.
    	L2 regularization is used to prevent overfitting and the weight of L2 loss term is $10^{-7}$.
    	Batch size is 128, the number of epochs is 20, and the dimension of all hidden states is 64.
    	Initial relation features are set as their identities, while BOW/BERT features are studied in Section \ref{sec:model_variants}.
    	The above settings are determined by optimizing the classification accuracy on the validation set of WN18RR, and kept unchanged for all datasets.
    	
    	During experiments we find that performance of different number of context hops and the maximum path length largely depends on datasets, so these hyper-parameters are tuned separately for each dataset.
    	We present their default settings in Table \ref{table:hp}, and search spaces of hyper-parameters as follows:
		\begin{itemize}
			\item Dimension of hidden states: $\{8, 16, 32, 64\}$;
			\item Weight of L2 loss term: $\{10^{-8}, 10^{-7}, 10^{-6}, 10^{-5}\}$;
			\item Learning rate: $\{0.001, 0.005, 0.01, 0.05, 0.1\}$;
			\item The number of context hops: $\{1, 2, 3, 4\}$;
			\item Maximum path length: $\{1, 2, 3, 4\}$.
			%\item The number of neighbor samples in context message passing: $\{4, 8, 16, 32, 64, 128, 256\}$;
			%\item The number of path samples in path message passing if using RNN path updater: $\{4, 8, 16, 32, 64, 128\}$.
		\end{itemize}
		
		\begin{table}[t]
			\centering
			\small
			\setlength{\tabcolsep}{2pt}
			\begin{tabular}{c|cccccc}
				\hline
				& FB15K & FB15K-237 & WN18 & WN18RR & NELL995 & DDB14 \\
				\hline
				%\#epochs & \multicolumn{6}{c}{20} \\
				%Batch size & \multicolumn{6}{c}{128} \\
				%Hidden dim & \multicolumn{6}{c}{64} \\
				%L2 weight & \multicolumn{6}{c}{$10^{-7}$} \\
				%Learning rate & \multicolumn{6}{c}{0.005} \\
				%Edge features & \multicolumn{6}{c}{Identity} \\
				\#context hops & 2 & 2 & 3 & 3 & 2 & 3 \\
				Max. path len. & 2 & 3 & 3 & 4 & 3 & 4 \\
				%\#neighbor samples & 32 & 32 & 16 & 8 & 8 & 8  \\
				%\#path samples & \multicolumn{6}{c}{8} \\
				%$AGG_{neighbor}$ & Concat & Concat & Cross & Cross & Concat & Cross \\
				%Path updater & \multicolumn{6}{c}{Concat} \\
				%$AGG_{path}$ & \multicolumn{6}{c}{Mean} \\
				\hline
			\end{tabular}
			\vspace{0.05in}
			\caption{Dataset-specific hyper-parameter settings: the number of context hops and the maximum path length.}
			\label{table:hp}
			\vspace{-0.2in}
		\end{table}
    	
    	Each experiment of \alg is repeated for three times.
    	We report average performance and standard deviation as the results.

	\subsection{More Results of Explainability on DDB14}
	\label{sec:ddb14_explain}
		After training on DDB14, we print out the transformation matrix of the context aggregator and the path aggregator in \alg, and the results are shown as heat maps in Figures \ref{fig:heatmap_neighbor} and \ref{fig:heatmap_path}, respectively.
		The degree of darkness of an entry in Figure \ref{fig:heatmap_neighbor} (Figure \ref{fig:heatmap_path}) denotes the strength of correlation between the existence of a contextual relation (a relational path) and a predicted relation.
		Relation IDs as well as their meanings are listed as follows for readers' reference:

		\begin{figure}[t]
    		\centering 
    		\includegraphics[width=0.4\textwidth]{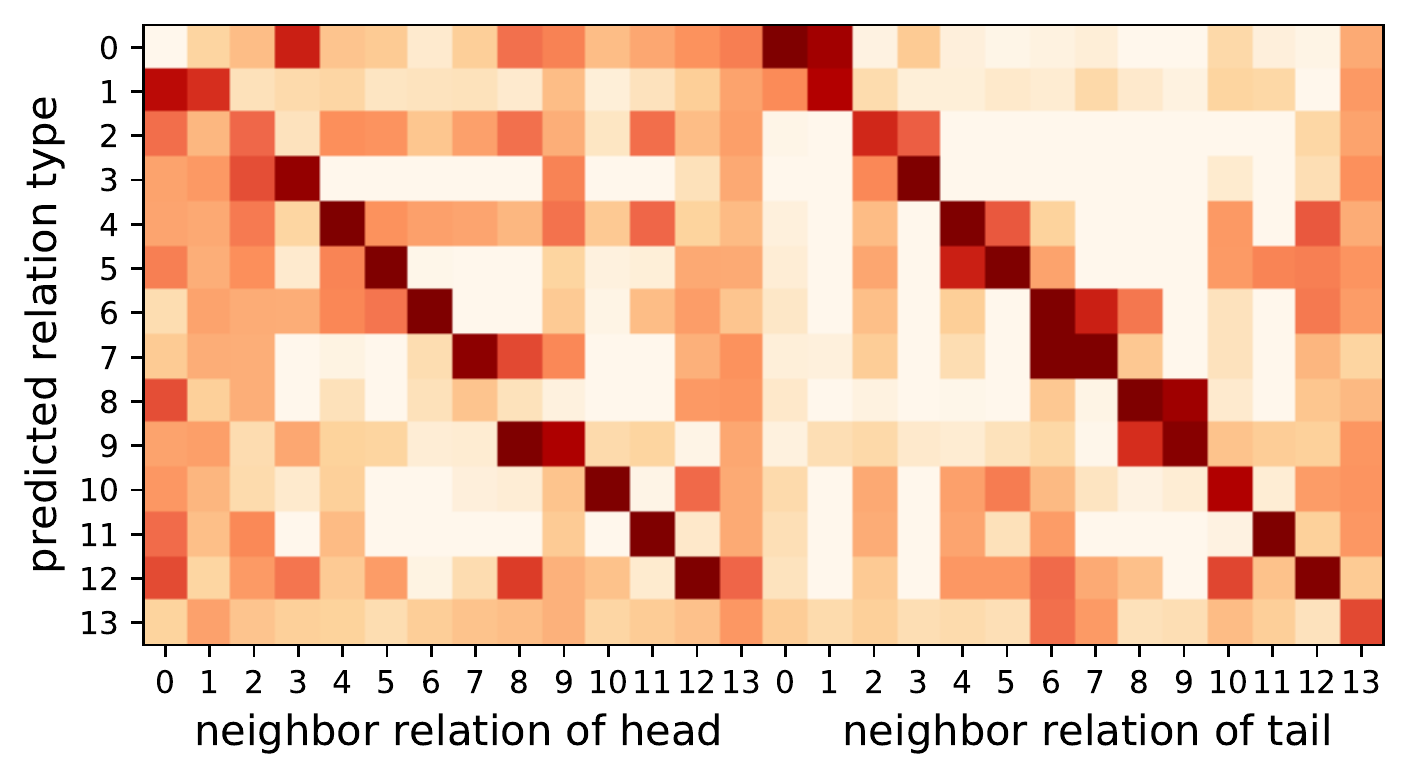}
    		\vspace{-0.1in}
    		\caption{The learned correlation between the contextual relations of head/tail and the predicted relations on DDB14.} 
    		\label{fig:heatmap_neighbor}
		\end{figure}

		\begin{table}[h]
			\small
			\centering
			\setlength{\tabcolsep}{8pt}
			\begin{tabular}{l|l}
				\hline
				0: belong(s) to the category of & 7: interacts with\\
			    1: is a category subset of & 8: belongs to the drug family of\\
			    2: may cause & 9: belongs to drug super-family\\
			    3: is a subtype of & 10: is a vector for\\
			    4: is a risk factor for & 11: may be allelic with\\
			    5: is associated with & 12: see also\\
			    6: may contraindicate & 13: is an ingredient of\\
			    \hline
			\end{tabular}
			%\caption{Indices of relation types on DDB14 dataset.}
			\label{table:index}
			\vspace{-0.1in}
		\end{table}

		Figure \ref{fig:heatmap_neighbor} shows that most of large values are distributed along the diagonal.
		This is in accordance with our intuition, for example, if we want to predict the relation for pair $(h, ?, t)$ and we observe that $h$ appears in another triplet $(h, \texttt{is a risk factor for}, t')$, then we know that the type of $h$ is risk factor and it is likely to be a risk factor of other entities in the KG.
		Therefore, ``$?$'' are more likely to be ``\texttt{is a risk factor for}'' than ``\texttt{belongs to the drug family of}'' since $h$ is not a drug.
		In addition, we also find some large values that are not in the diagonal, e.g., (\texttt{belongs to the drug family of}, \texttt{belongs to the drug super-family}) and (\texttt{may contraindicate}, \texttt{interacts with}).
		
		We also have some interesting findings from Figure \ref{fig:heatmap_path}.
		First, we find that many rules from Figure \ref{fig:heatmap_path} is with the form:
		\begin{equation*}
			(a, \texttt{see also}, b) \wedge (b, \texttt{R}, c) \Rightarrow (a, \texttt{R}, c),
		\end{equation*}
		where \texttt{R} is a relation type in the KG.
		These rules are indeed meaningful because $(a, \texttt{see also}, b)$ means $a$ and $b$ are equivalent thus can interchange with each other.
		
		We also find \alg learns rules that show the relation type is transitive, for example:
		\begin{equation*}
		\begin{split}
			&(a, \texttt{is associated with}, b) \wedge (b, \texttt{is associated with}, c)\\[-0.05in]
			\Rightarrow &(a, \texttt{is associated with}, c);
		\end{split}
		\end{equation*}
		\begin{equation*}
		\begin{split}
			&(a, \texttt{may be allelic with}, b) \wedge (b, \texttt{may be allelic with}, c)\\[-0.05in]
			\Rightarrow &(a, \texttt{may be allelic with}, c).
		\end{split}
		\end{equation*}
		
		Other interesting rules learned by \alg include:
		\begin{equation*}
			(a, \texttt{belong(s) to the category of}, b) \Rightarrow (a, \texttt{is a subtype of}, b);
		\end{equation*}
		\begin{equation*}
			(a, \texttt{is a risk factor for}, b) \Rightarrow (a, \texttt{may cause}, b);
		\end{equation*}
		\begin{equation*}
			(a, \texttt{may cause}, c) \wedge (b, \texttt{may cause}, c) \Rightarrow (a, \texttt{may be allelic with}, b);
		\end{equation*}
		\vspace{-0.1in}
		\begin{equation*}
	    \begin{split}
			&(a, \texttt{is a risk factor for}, c) \wedge (b, \texttt{is a risk factor for}, c)\\[-0.05in]
			\Rightarrow &(a, \texttt{may be allelic with}, b).
		\end{split}
		\end{equation*}

\end{document}